\def\eqref#1{equation~\ref{#1}}
\def\1{\bm{1}}
\def\ra{{\textnormal{a}}}
\def\va{{\bm{a}}}
\def\vx{{\bm{x}}}
\def\vz{{\bm{z}}}
\DeclareMathAlphabet{\mathsfit}{\encodingdefault}{\sfdefault}{m}{sl}
\SetMathAlphabet{\mathsfit}{bold}{\encodingdefault}{\sfdefault}{bx}{n}
\newcommand{\E}{\mathbb{E}}
\newcommand{\R}{\mathbb{R}}
\DeclareMathOperator*{\argmin}{arg\,min}
\newcommand*{\eg}{{\it e.g.}\@\xspace}
\newcommand*{\ie}{{\it i.e.}\@\xspace}
\newcolumntype{x}[1]{%
>{\raggedleft\hspace{0pt}}p{#1}}%
\newcommand{\tn}{\tabularnewline}
\renewcommand{\1}{\mathds{1}}
\definecolor{linkcolor}{RGB}{74, 102, 146}
\renewcommand{\ra}[1]{\renewcommand{\arraystretch}{#1}}
\newcommand{\cellhi}{\cellcolor{RoyalBlue!15}}
\newtheoremstyle{prop}
  {\topsep}
  {\topsep}
  {}
  {}
  {\itshape}
  {}
  {.5em}
  {\thmname{#1}\thmnumber{ #2}\thmnote{ (#3)}}
\theoremstyle{prop}
\newcommand{\norm}[1]{\left\lVert#1\right\rVert}
\newcommand*{\tran}{^{\mkern-1.5mu\mathsf{T}}}
\newcommand{\vxk}{\bm{x_k}}
\newcommand{\vdeltax}{\bm{\delta_k}(\bm{x})}
\newcommand{\vdeltaz}{\bm{\delta_k}(\bm{z})}
\title{Semi-Discrete Normalizing Flows\\ through Differentiable Tessellation}
\author{%
  Ricky T. Q. Chen \\
  Meta AI \\
  \texttt{rtqichen@meta.com} \\
  \And
  Brandon Amos \\
  Meta AI \\
  \texttt{bamos@meta.com} \\
  \And
  Maximilian Nickel \\
  Meta AI \\
  \texttt{maxn@meta.com} \\
}
\begin{document}

\maketitle

\begin{abstract}
Mapping between discrete and continuous distributions is a difficult task and many have had to resort to heuristical approaches. We propose a tessellation-based approach that directly learns quantization boundaries in a continuous space, complete with exact likelihood evaluations.
This is done through constructing normalizing flows on convex polytopes parameterized using a simple homeomorphism with an efficient log determinant Jacobian.
We explore this approach in two application settings, mapping from discrete to continuous and vice versa.
Firstly, a \emph{Voronoi dequantization} allows automatically learning quantization boundaries in a multidimensional space.
The location of boundaries and distances between regions can encode useful structural relations between the quantized discrete values. 
Secondly, a \emph{Voronoi mixture model} has near-constant computation cost for likelihood evaluation regardless of the number of mixture components. 
Empirically, we show improvements over existing methods across a range of structured data modalities.
\end{abstract}

\section{Introduction}

Likelihood-based models have seen increasing usage across multiple data modalities. 
Across a variety of modeling approaches, the family of normalizing flows stands out as a large amount of structure can be incorporated into the model, aiding its usage in modeling a wide variety of domains such as images~\citep{dinh2016density,kingma2018glow}, graphs~\citep{liu2019graph}, invariant distributions~\citep{kohler2020equivariant,bilovs2021scalable} and molecular structures~\citep{satorras2021n}. 
However, the majority of works focus on only continuous functions and continuous random variables. 
This restriction can make it difficult to apply such models to distributions with implicit discrete structures, \eg distributions with discrete symmetries, multimodal distributions, distributions with holes.

In this work, we incorporate discrete structure into standard normalizing flows, while being entirely composable with any other normalizing flow. 
Specifically, we propose a homeomorphism between the unbounded domain and convex polytopes, which are defined through a learnable tessellation of the domain, \ie a set of disjoint subsets that together fully cover the domain.
This homeomorphism is cheap to compute, has a cheap inverse, and results in an efficient formulation of the resulting change in density. 
In other words, this transformation is highly scalable---at least computationally.

Our method has the potential to be useful for a variety of applications.
Firstly, the learned tessellation naturally allows defining a dequantization method for discrete variables. 
This allows likelihood-based models that normally only act on continuous spaces, such as normalizing flows, to work directly on discrete data in a flexible choice of embedding space with the ability to capture relational structure. 
Secondly, if we take each convex polytope as the support of a single mixture component, then the full tessellation defines a mixture model with disjoint components. 
This allows us to scale to mixtures of normalizing flows while retaining the compute cost of a single model.
Following semi-discrete optimal transportation~\citep{peyre2017computational}, which defines couplings between discrete and continuous measures, we refer to our models as semi-discrete normalizing flows that learn transformations between discrete and continuous random variables.

\begin{figure}[t]
    \centering
    \begin{subfigure}[b]{0.33\linewidth}
    \includegraphics[width=\linewidth]{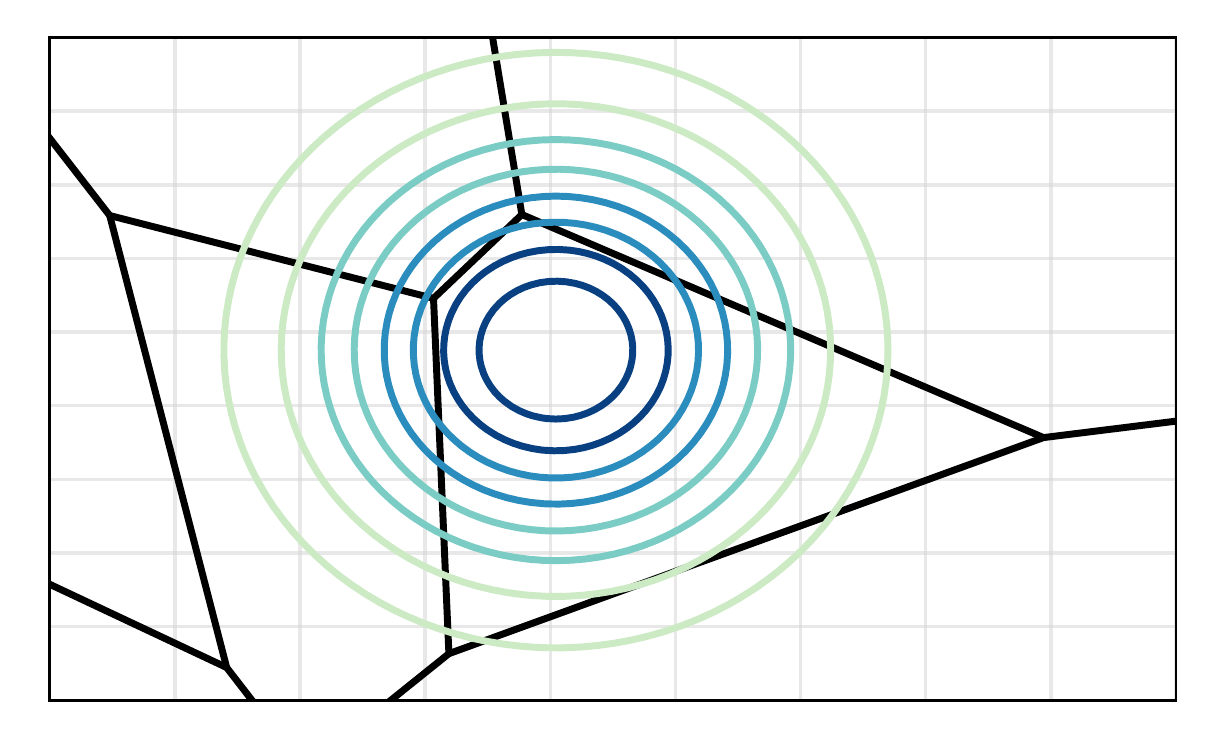}
    \vspace{-1.8em}
    \caption*{$p(\vx)$}
    \end{subfigure}%
    \begin{subfigure}[b]{0.33\linewidth}
    \includegraphics[width=\linewidth]{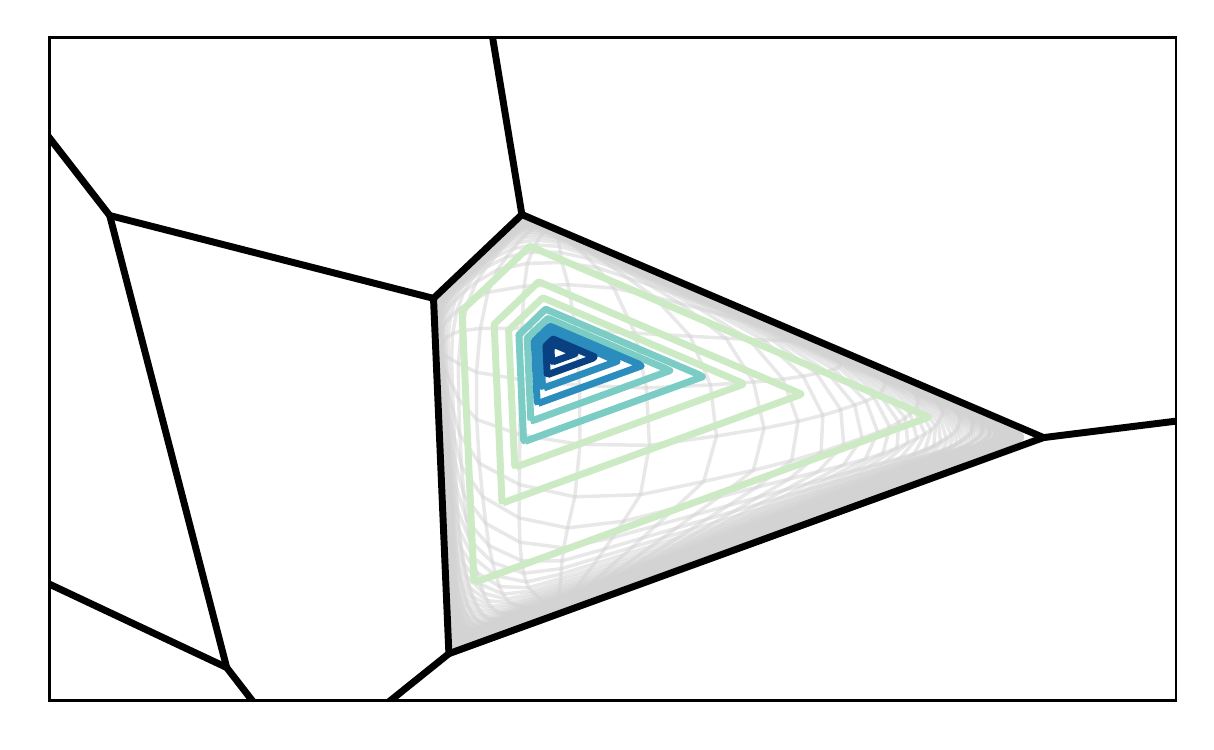}
    \vspace{-1.8em}
    \caption*{$p(f(\vx))$}
    \end{subfigure}\\[-31mm]
    \begin{tikzpicture}[overlay,remember picture]
      \draw[-{Stealth[scale=1.5]}] (-1,.15) to [bend left=30]  node [above] {$f$} (1.,.15);
      \draw[-{Stealth[scale=1.5]}] (1,-2.05) to [bend left=30]  node [below] {$f^{-1}$} (-1.,-2.05);
    \end{tikzpicture}\\[27mm]
    \caption{
    We propose an invertible mapping $f$ between $\R^D$ and a convex polytope, which is parameterized based on a differentiable Voronoi tessellation of $\R^D$.
    This mapping adds discrete structure into normalizing flows, and its inverse $f^{-1}$ and log determinant Jacobian can both be efficiently computed.
    }
    \label{fig:fig1}
\end{figure}

\section{Preliminaries}\label{sec:background}

\paragraph{Normalizing Flows} This family of generative models typically includes any model that makes use of invertible transformations $f$ to map samples between distributions. 
The relationship between the original and transformed density functions have a closed form expression,
\begin{equation}\label{eq:normflow}
    p_z(f(\vx)) = p_x(\vx)\left| \det \tfrac{\partial f(\vx)}{\partial \vx} \right|^{-1}.
\end{equation}
For instance, $p_z$ can be a simple base distribution while $f$ is learned so that the resulting $p_x$ is close to some target distribution. Many different choices of $f$ have been discussed in the literature, leading to different trade-offs and use cases~\citep{papamakarios2019normalizing,kobyzev2020normalizing}.

Furthermore, if the domain and codomain of $f$ are different, then $p_x$ and $p_z$ can have different supports, \ie regions where probability is non-zero. Currently, existing designs of $f$ that have this property act independently for each dimension, such as the logit transform~\citep{dinh2016density}. To the best of our knowledge, the use of general support-modifying invertible transformations have not been discussed extensively in the literature. 

\paragraph{Dequantization} In order to model discrete data with density models, a standard approach is to combined with dequantization methods. These methods provide a correspondence between discrete values and convex subsets, where a discrete variable $y \in \mathcal{Y}$ is randomly placed within a disjoint subset of $\R^D$. There is also a correspondence between the likelihood values of the discrete random variable and the dequantized random variable \citep{theis2015note}.

Let $A_y$ denote the subset corresponding to $y$ and let
$q(\vx|y)$ be the dequantization model which has a bounded support in $A_y$.
Then any density model $p(\vx)$ with support over $\R^D$ satisfies
\begin{equation}\label{eq:dequant_elbo}
    \log p(y) \geq \E_{\vx\sim q(\vx | y)} \left[ \log (\1_{[\vx \in A_y]}p(\vx)) - \log q(\vx | y) \right]
    = \E_{\vx\sim q(\vx | y)} \left[ \log p(\vx) - \log q(\vx | y) \right]
\end{equation}
Thus with an appropriate choice of dequantization, maximizing the likelihood under the density model $p(\vx)$ is equivalent to maximizing the likelihood under the discrete model $p(y)$.

In existing dequantization methods, the value of $D$ and the choice of subsets $A_y$ are entirely dependent on the type of discrete data (ordinal vs non-ordinal) and the number of discrete values $|\mathcal{Y}|$ in the non-ordinal case. Furthermore, the subsets $A_y$ do not interact with one another and are fixed during training. In contrast, we conjecture that important relations between discrete values should be modeled as part of the parameterization of $A_y$, and that it'd be useful to be able to automatically learn the boundaries of $A_y$ based on gradients.

\paragraph{Disjoint mixture models}
Building mixture models is one of the simplest methods for creating more flexible distributions from simpler one, and mixture models can typically be shown to be universal density estimators~\citep{goodfellow2016deep}.
However, in practice this often requires a large number of mixture components, which quickly becomes computationally expensive. 
This is because evaluating the likelihood under a mixture model requires evaluating the likelihood of each component.
To alleviate this issue, \citet{dinh2019rad} recently proposed to use components with disjoint support. 
In particular, let $\{A_k\}_{k=1}^K$ be disjoint subsets of $\R^D$, such that each mixture component is defined on one subset and has support restricted to that particular subset.
The likelihood of the mixture model then simplifies to
\begin{equation}
    p(\vx) = \sum_{k=1}^K p(\vx | k) p(k) 
    = \sum_{k=1}^K \1_{[\vx \in A_k]} p(\vx | k) p(k)
    = p(\vx | k=g(\vx)) p(k=g(\vx))
\end{equation}
where $g:\R^D \rightarrow \{1,\dots,K\}$ is a set identification function that satisfies $\vx \in A_{g(\vx)}$. This framework allows building large mixture models while no longer having a compute cost that scales with $K$. In contrast to variational approaches with discrete latent variables, the use of disjoint subsets provides an exact (log-)likelihood and not a lower bound.

\section{Voronoi Tessellation for Normalizing Flows}\label{sec:method}
We first discuss how we parameterize each subset as a convex polytope, and in the next part, discuss constructing distributions on each convex polytope using a differentiable homeomorphism.

\paragraph{Parameterizing disjoint subsets} We separate the domain into subsets through a Voronoi tessellation \citep{voronoi1908nouvelles}. This induces a correspondence between each subset and a corresponding \textit{anchor point}, which provides a differentiable parameterization of the tessellation for gradient-based optimization.

Let $X = \{\vx_1, \dots, \vx_K\}$ be a set of anchor points in $\R^D$. The \textit{Voronoi cell} for each anchor point is 
\begin{equation}\label{eq:vorteldef}
    V_k \triangleq \{\vx \in \R^D : \norm{\vxk - \vx} < \norm{\vx_i - \vx}, i=1,\ldots,K\},
\end{equation}
\ie it defines a subset containing all points which have the anchor point as their nearest neighbor. 
Together, these subsets $\{V_k\}_{k=1}^K$form a \textit{Voronoi tessellation} of $\R^D$.
Each subset can equivalently be expressed in the form of a convex polytope,
\begin{equation}
    V_k = \{\vx\in\R^D : \va_i\tran \vx < b_i \; \forall i \neq k\},
    \;\text{ where } \va_i = 2(\vx_i - \vx_k)\tran, b_i = \norm{\vx_i}^2 - \norm{\vx_k}^2.
\end{equation}
For simplicity, we also include box constraints so that all Voronoi cells are bounded in all directions.
\begin{equation}\label{eq:box_constraints}
    V_k = \{\vx\in\R^D : \underbrace{\va_i\tran \vx < b_i \; \forall i \neq k}_{\text{Voronoi cell}},\;\; \underbrace{\mathbf{c}^{(l)} < \vx < \mathbf{c}^{(r)}\vphantom{a_i}}_{\text{box constraints}}\}
\end{equation}
Thus, the learnable parameters of this Voronoi tessellation are the anchor points $\{\vx_1, \dots, \vx_K\}$, and the box constraints $\mathbf{c}_l, \mathbf{c}_r \in \R^D$. These will be trained using gradients from an invertible transformation and resulting distribution defined in each Voronoi cell, which we discuss next.

\paragraph{Invertible mapping onto subset}\label{sec:vorproj}

\begin{wrapfigure}[9]{r}{0.3\linewidth}
\vspace{-1.4em}
\centering
\includegraphics[width=\linewidth]{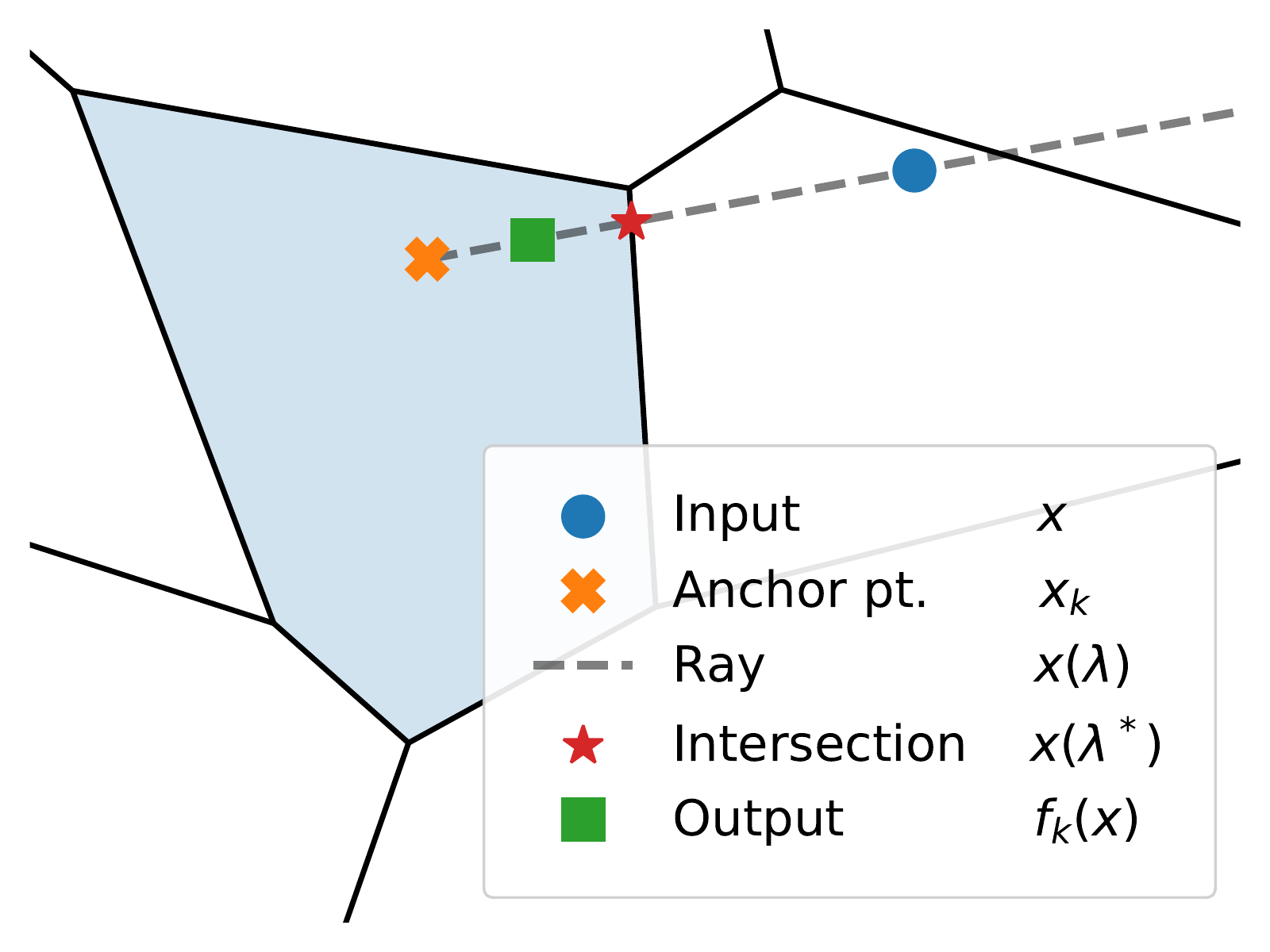}
\vspace{-1.6em}
\caption{An illustration.}
\label{fig:mapping}
\end{wrapfigure}
Within the normalizing flow framework, defining a probability distribution on a bounded support is equivalent to defining an invertible transformation that maps from the unbounded support onto the appropriate support. \Cref{fig:mapping} illustrates the transformation of mapping onto a shaded region.

Given a cell $V_k$ for some $k \in \{1,\dots,K\}$, we construct an invertible mapping $f_k: \R^D \rightarrow V_k$ by following 2 steps.
Let $\vx \in \R^D$. First, if $\vx = \vxk$, the anchor point of $V_k$, we simply set $f_k(\vx) = \vx$. Otherwise:
\begin{enumerate}[leftmargin=*]
\item[1.] Determine where the ray starting from $\vxk$ in the direction of $\vx$ intersects with the boundary of $V_k$. 
\end{enumerate}
First define the direction
$\vdeltax \triangleq \frac{\vx - \vxk}{\norm{\vx - \vxk}}$ and the ray $\vx(\lambda) \triangleq \vxk + \lambda \vdeltax$, with $\lambda > 0$. 
Since $V_k$ is convex, we can frame this as a linear programming problem.
\begin{equation}\label{eq:lamb_prob}
    \max \; \lambda \quad \text{ s.t. } \quad \vx(\lambda) \in \overline{V_k}, \lambda \geq 0
\end{equation}
where $\overline{V_k}$ is the closure of $V_k$.
We discuss solving this efficiently in \Cref{sec:remarks}.

Let $\lambda^*$ be the solution. This solution exists if $V_k$ is bounded, which is always true due to the use of box constraints. Then $\vx(\lambda^*)$ will be the point of intersection. 

This first step solves for the farthest point in the Voronoi cell in the direction of $x$. Using this knowledge, we can now map all points that lie on this ray onto the Voronoi cell.

\begin{enumerate}[leftmargin=*]
\item[\noindent 2.] Apply an invertible transformation such that any point on the ray $\{\vx(\lambda) : \lambda > 0\}$ is mapped onto the line segment $\{\vxk + \alpha(\vx(\lambda^*) - \vxk) : \alpha \in (0, 1)\}$.
\end{enumerate}

There are many possible choices for designing this transformation. An intuitive choice is to use a monotonic transformation of the relative distance from $\vx$ to the anchor point $\vxk$.
\begin{equation}\label{eq:squash}
    f_k(\vx) \triangleq \vxk + \alpha_k\left(\tfrac{\norm{\vx - \vxk}}{\norm{\vx(\lambda^*) - \vxk}}\right)(\vx(\lambda^*) - \vxk)
\end{equation}
where $\alpha_k$ is an appropriate \emph{invertible} squashing function from $R^+$ to $(0, 1)$. In our experiments, we use $\alpha_k(h) = \text{softsign}(\gamma_k h)$ where $\gamma_k$ is a learned cell-dependent scale. Other choices should also work, such as a monotonic neural network; however, depending on the application, we may need to compute $\alpha_k^{-1}$ for computing the inverse mapping, so it's preferable to have an analytical inverse.

\subsection{Remarks and Propositions}\label{sec:remarks}

\paragraph{Box constraints} There can be continuity problems if a Voronoi cell is unbounded, as the solution to \Cref{eq:squash} does not exist if $x(\lambda^*)$ diverges. Furthermore, when solving \Cref{eq:lamb_prob}, it can be difficult to numerically distinguish between an unbounded cell and one whose boundaries are very far away. It is for these reasons that we introduce box constraints~(\autoref{eq:box_constraints}) in the formulation of Voronoi cells which allows us to sidestep these issues for now.

\paragraph{Solving for $\mathbf{\lambda^*}$} \Cref{eq:lamb_prob} can be solved numerically, but this approach is prone to numerical errors and requires implicit differentiation through convex optimization solutions~\citep{agrawal2019differentiable}. 
We instead note that the solution of \Cref{eq:lamb_prob} can be expressed in closed form, since it is always going to be the intersection of the ray $\vx(\lambda)$ with the nearest linear constraint.

Let $\va_i\tran{}\vx = b_i$ be the plane that represents one of the linear constraints in \Cref{eq:box_constraints}, which are expressed in \Cref{eq:vorteldef}. Let $\lambda_i$ be the intersection of this plane with the ray, \ie it is the solution to $\va_i\tran{}\vx(\lambda_i) = b_i$, then the solution is simply the smallest positive $\lambda_i$, which satisfies all the linear constraints:
\begin{equation}
    \lambda^* = \min \{\lambda_i: \lambda_i > 0 \}, \quad \text{where }\; \lambda_i = \tfrac{b_i - \va_i\tran{} \vxk}{\va_i\tran{}\vdeltax}.
\end{equation}
There are a total of $K + 2D - 1$ linear constraints, including the Voronoi cell boundaries and box constraints, which can be computed fully in parallel.
This also allows end-to-end differentiation of the mapping $f_k$ via automatic differentiation, providing the ability to learn the parameters of $f_k$, \ie parameters of $\alpha_k$ and the Voronoi tessellation.


\paragraph{Homeomorphism with continuous density} We can show that $f_k$ is a bijection, and both $f_k$ and $f_k^{-1}$ are continuous. This allows us to use $f_k$ within the normalizing flows framework, as a mapping between a distribution $p_x$ defined on $R^D$ and the transformed distribution $p_z$ on $V_k$. Furthermore, the Jacobian is continuous almost everywhere. Proofs are in \Cref{app:proofs}.
\begin{restatable}{prop}{homeomorphism}\label{prop:homeomorphism}
The mapping $f_k:\R^D \rightarrow V_k$ as defined in the 2-step procedure is a homeomorphism.
\end{restatable}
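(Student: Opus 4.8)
The plan is to recast $f_k$ in a single closed form built from the Minkowski gauge of the cell, which makes all three homeomorphism properties (continuous bijection with continuous inverse) transparent at once. Writing $\vv = \vx - \vxk$, I first observe that both $\lambda^*$ and the squashing argument in \Cref{eq:squash} depend on $\vx$ only through $\vv$, and that the output direction $f_k(\vx)-\vxk$ is always a positive multiple of $\vv$. I would translate the cell to the origin and introduce the gauge $\mu(\vv) \triangleq \inf\{t>0 : \vv/t \in V_k - \vxk\}$. The decisive observation is that the distance to the boundary along the unit direction $\vu=\vdeltax$ is exactly $\lambda^* = 1/\mu(\vu)$, so by positive homogeneity of $\mu$ the squashing argument becomes $\norm{\vx-\vxk}/\lambda^* = \mu(\vv)$ while $\vx(\lambda^*)-\vxk = \vv/\mu(\vv)$. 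Substituting into \Cref{eq:squash} collapses the two-step construction into
\[
  f_k(\vx) = \vxk + \frac{\alpha_k(\mu(\vv))}{\mu(\vv)}\,\vv, \qquad \vv = \vx-\vxk,
\]
with $f_k(\vxk)=\vxk$, and this is the identity I would work from.

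Next I would verify the gauge has the properties this argument relies on. Because the box constraints in \Cref{eq:box_constraints} make $V_k$ bounded, and because the anchor lies strictly inside its own cell (for $i\neq k$ one has $b_i-\va_i\tran\vxk = \norm{\vx_i-\vxk}^2 > 0$, and $\vxk$ is taken strictly inside the box), the set $V_k-\vxk$ is a bounded, open, convex neighborhood of the origin. Standard convex analysis then gives that $\mu$ is finite, positively homogeneous of degree one, sublinear hence continuous, satisfies $\rho\norm{\vv}\le \mu(\vv)\le \sigma\norm{\vv}$ for constants $0<\rho\le\sigma$ (so $\mu>0$ off the origin and $\mu(0)=0$), and that $V_k-\vxk = \{\vv:\mu(\vv)<1\}$. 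I would include a short self-contained derivation of the continuity rather than merely cite it, since it is the technical heart; the gauge viewpoint is precisely what lets me avoid the awkwardness of the $\min_i \lambda_i$ formula, whose terms blow up or change sign as $\va_i\tran\vu$ crosses zero.

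With the closed form in hand the remaining steps are short. For bijectivity I would solve $\vz = f_k(\vx)$ directly: setting $\vw=\vz-\vxk$, homogeneity gives $\mu(\vw)=\alpha_k(\mu(\vv))$, hence $\mu(\vv)=\alpha_k^{-1}(\mu(\vw))$ and
\[
  f_k^{-1}(\vz) = \vxk + \frac{\alpha_k^{-1}(\mu(\vz-\vxk))}{\mu(\vz-\vxk)}\,(\vz-\vxk),
\]
which is well defined for every $\vz\in V_k$ because $\mu(\vz-\vxk)\in(0,1)$ lies in the domain of $\alpha_k^{-1}$; injectivity and surjectivity onto $V_k$ then follow since $\alpha_k:\R^+\to(0,1)$ is a continuous strictly increasing bijection. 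Continuity of $f_k$ and $f_k^{-1}$ away from $\vxk$ is immediate from continuity of $\mu$, $\alpha_k$, and $\alpha_k^{-1}$ together with $\mu>0$ there.

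The step I expect to be the main obstacle is continuity at the anchor $\vxk$ itself, where $\mu(\vv)\to 0$ and the formula is a $0/0$ expression. I would control it by bounding the scaling factor: $\bigl\lVert \tfrac{\alpha_k(\mu(\vv))}{\mu(\vv)}\vv\bigr\rVert = \alpha_k(\mu(\vv))\,\tfrac{\norm{\vv}}{\mu(\vv)} \le \alpha_k(\mu(\vv))/\rho \to 0$ as $\vv\to 0$, using $\alpha_k(0^+)=0$ and the lower bound $\mu(\vv)\ge\rho\norm{\vv}$; the identical bound with $\alpha_k^{-1}$ handles $f_k^{-1}$ at $\vxk$. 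Hence $f_k$ is a continuous bijection with continuous inverse, i.e.\ a homeomorphism.
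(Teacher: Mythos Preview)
Your proposal is correct and takes a genuinely different route from the paper. The paper argues via a polar-style decomposition $x\leftrightarrow(\Delta,\delta)$: since $\delta$ determines the ray and hence $\vx(\lambda^*)$, and $\alpha_k$ is a bijection in $\Delta$, the map is a bijection; continuity of $f_k$ is asserted from continuity of $\Delta$ and $\vx(\lambda^*)$, and continuity of $f_k^{-1}$ is obtained by appealing to Brouwer's invariance of domain. Your approach instead recognises that $\lambda^*$ and the squashing argument are both encoded by the Minkowski gauge $\mu$ of $V_k-\vxk$, collapses the two-step construction into the single radial rescaling $f_k(\vx)=\vxk+\tfrac{\alpha_k(\mu(\vv))}{\mu(\vv)}\vv$, and then reads off the explicit inverse by homogeneity. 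This buys you two things: you avoid invariance of domain entirely (the inverse is exhibited and its continuity checked directly), and the continuity of $f_k$ comes from continuity of the gauge rather than from the closed-form $\lambda^*=\min\{\lambda_i:\lambda_i>0\}$, which as you note is awkward because the individual $\lambda_i$ diverge as $\va_i\tran\vu$ crosses zero. The paper's argument is shorter but leans on a nontrivial topological theorem and is terser about why $\vx(\lambda^*)$ is continuous; your gauge reformulation is more elementary and, incidentally, recovers in one line the explicit inverse that the paper derives separately in its appendix.
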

\begin{restatable}{prop}{continuouspdf}\label{prop:continuouspdf}
If $p_x(x)$ is continuous, then the transformed density $p_z(f_k(x))$ is continuous a.e.
\end{restatable}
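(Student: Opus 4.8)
The plan is to start from the normalizing-flow change-of-variables identity (\Cref{eq:normflow}), which here reads $p_z(f_k(\vx)) = p_x(\vx)\,\bigl|\det \partial f_k(\vx)/\partial\vx\bigr|^{-1}$; this is well-defined since $f_k$ is a bijection by \Cref{prop:homeomorphism}. Since $p_x$ is continuous by hypothesis, it suffices to show that the Jacobian determinant $J(\vx)\triangleq\det \partial f_k(\vx)/\partial\vx$ is continuous and nonzero at every point outside a set of Lebesgue measure zero; the reciprocal $|J|^{-1}$ is then continuous there, and a product of continuous functions is continuous.

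The key observation I would exploit is that $f_k$ preserves the direction from the anchor. Writing $\vx = \vxk + r\vu$ with $r=\norm{\vx-\vxk}>0$ and $\vu=\vdeltax\in S^{D-1}$, the optimal value $\lambda^*$ depends only on the direction $\vu$, and $f_k(\vx)=\vxk+s(r,\vu)\,\vu$ with radial output $s(r,\vu)=\alpha_k\!\bigl(r/\lambda^*(\vu)\bigr)\,\lambda^*(\vu)$. In spherical coordinates the map is $(r,\vu)\mapsto(s,\vu)$, whose coordinate Jacobian is block triangular with the angular block equal to the identity; converting the Cartesian volume element then yields $J(\vx)=(s/r)^{D-1}\,\partial s/\partial r=(s/r)^{D-1}\,\alpha_k'\!\bigl(r/\lambda^*(\vu)\bigr)$. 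Because $\alpha_k$ is a strictly increasing squashing function we have $\alpha_k'>0$, and $s,r>0$, so $J(\vx)>0$ wherever this expression is defined. This simultaneously establishes that $J$ is nonzero and reduces its continuity to the continuity of $\vu\mapsto\lambda^*(\vu)$ together with its derivative.

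From the closed form $\lambda^*(\vu)=\min_i\lambda_i(\vu)$ with $\lambda_i(\vu)=(b_i-\va_i\tran\vxk)/(\va_i\tran\vu)$ over the finitely many active, positive constraints, each $\lambda_i$ is smooth wherever $\va_i\tran\vu\neq0$, and a pointwise minimum of finitely many smooth functions is smooth at every direction at which the minimizer is unique and the crossing is nondegenerate. The exceptional directions — those pointing at a ridge of $\partial V_k$, where two facets tie for the active constraint, together with directions parallel to a facet — form a finite union of lower-dimensional subsets of $S^{D-1}$ and hence have measure zero on the sphere. I would then let $N$ be the union of the anchor point $\vxk$ with the cone over these exceptional directions, so $N$ has Lebesgue measure zero in $\R^D$. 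On $\R^D\setminus N$, $\lambda^*$ is smooth, hence $s$ and $J$ are continuous with $J>0$, and therefore $p_z\circ f_k$ is continuous, which is exactly the claim.

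The main obstacle I anticipate is the measure-zero bookkeeping in the last step: making precise that the radial projection of the $(D-2)$-skeleton of the bounded polytope $\overline{V_k}$ (its ridges), together with the facet-parallel directions, is a genuinely measure-zero subset of $S^{D-1}$, and that away from it a single facet is strictly active so that $\lambda^*$ locally inherits the smoothness of one $\lambda_i$. Continuity of $\lambda^*$ across the switching set (needed only to reconcile with \Cref{prop:homeomorphism}) is immediate from the minimum being taken over continuous functions, but the almost-everywhere differentiability is where the geometry of the tessellation must genuinely be invoked.
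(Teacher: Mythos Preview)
Your argument is correct and takes a genuinely different route from the paper. The paper's proof simply invokes the rank-2 Jacobian decomposition of \Cref{prop:logdetjac}, asserts that all the ingredient quantities there are continuous in $\vx$ (for $\vx\neq\vxk$), and concludes that the determinant --- being always nonsingular --- has a continuous reciprocal. You instead exploit the direction-preserving structure of $f_k$ in polar coordinates to obtain the closed form $J(\vx)=(s/r)^{D-1}\alpha_k'\!\bigl(r/\lambda^*(\vu)\bigr)$, from which $J>0$ is immediate since $\alpha_k'>0$. Your route actually buys more than you claim: this determinant formula involves only $\lambda^*(\vu)$, \emph{not} its derivative --- the $\partial s/\partial\vu$ terms live in the off-diagonal block of the spherical Jacobian and drop out of the determinant --- so continuity of $J$ already follows from continuity of $\lambda^*$, which holds on all of $S^{D-1}$ as the radial function of a bounded convex body. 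Hence the ridge and facet-parallel bookkeeping you flag as the ``main obstacle'' is needed only to locate where $f_k$ is differentiable (so that \Cref{eq:normflow} applies at all), not for the continuity of $J$ itself; your argument in fact yields continuity of $p_z\circ f_k$ on all of $\R^D\setminus\{\vxk\}$. By contrast, the paper's one-line appeal to \Cref{prop:logdetjac} is slightly cavalier, since $v_1=\partial\lambda^*/\partial\vdeltax$ genuinely jumps across ridges; the ``a.e.'' in the statement absorbs this, but it is your polar-coordinate formula that explains why the determinant remains continuous even though individual entries of the Jacobian do not.
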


\paragraph{Efficient computation of the log det Jacobian}

As $f_k$ is a mapping in $D$ dimensions, computing the log determinant Jacobian for likelihood computation can be costly and will scale poorly with $D$ if computed na\"ively. Instead, we note that the Jacobian of $f_k$ is highly structured. Intuitively, because $f_k$ depends only on the direction $\vdeltax$ and the distance away from $\vxk$, it only has two degrees of freedom regardless of $D$. In fact, the Jacobian of $f_k$ can be represented as a rank-2 update on a scaled identity matrix. This allows us to use the matrix determinant lemma to reformulate the log determinant Jacobian in a compute- and memory-efficient form. We summarize this in a proposition.
\begin{restatable}{prop}{logdetprop}\label{prop:logdetjac}
Let the transformation $f_k(x)$ and all intermediate quantities be as defined in \Cref{sec:vorproj} for some given input $x$.
Then the Jacobian factorizes as
\begin{equation}
    \tfrac{\partial f_k(x)}{\partial x} = cI + u_1v_1\tran{} + u_2v_2\tran{}
\end{equation}
for some $c \in \R, u_i \in \R^D, v_i \in \R^D$, and its log determinant has the form
\begin{equation}\label{eq:logdet}
    \log \left| \det \tfrac{\partial f_k(x)}{\partial x} \right|
    = \log \left| 1 + w_{11} \right| + \log \left| 1 + w_{22} - \tfrac{w_{12} w_{21}}{1 + w_{11}}\right| + D \log c
\end{equation}
\end{restatable}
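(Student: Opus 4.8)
The plan is to compute the Jacobian by hand after first observing that $f_k$ displaces every point purely radially about the anchor $\vxk$. Since $\vx(\lambda^*)-\vxk = \lambda^*\vdeltax$ and $\norm{\vx(\lambda^*)-\vxk}=\lambda^*$, the definition in \Cref{eq:squash} collapses to $f_k(\vx)=\vxk + s(\vx)\,(\vx-\vxk)$, where $s(\vx)\triangleq \alpha_k\!\big(\tfrac{r}{\lambda^*}\big)\tfrac{\lambda^*}{r}$ and $r\triangleq\norm{\vx-\vxk}$. Differentiating the product $s(\vx)(\vx-\vxk)$ immediately gives $\tfrac{\partial f_k}{\partial \vx}=s\,I + (\vx-\vxk)(\nabla s)\tran$, so the scaled-identity coefficient is $c=s$, and $c>0$ because $\alpha_k$ maps into $(0,1)$ while $r,\lambda^*>0$.

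The only real work is expanding $\nabla s$. I would use the chain rule through the two scalars by which $\vx$ enters $s$: the radius $r$, with $\nabla r=\vdeltax$, and the ray length $\lambda^*$, which depends on $\vx$ only through the direction $\vdeltax$. This yields $\nabla s=(\partial_r s)\,\vdeltax+(\partial_{\lambda^*}s)\,\nabla\lambda^*$, and substituting back produces the two rank-one terms $u_1v_1\tran+u_2v_2\tran$ with common left factor $u_1=u_2=\vx-\vxk$, $v_1=(\partial_r s)\vdeltax$, and $v_2=(\partial_{\lambda^*}s)\nabla\lambda^*$ — establishing the claimed factorization $\tfrac{\partial f_k}{\partial\vx}=cI+u_1v_1\tran+u_2v_2\tran$. (The shared left factor in fact makes the update rank one, a stronger statement than needed; writing it as two rank-one pieces is exactly what feeds the $2\times 2$ reduction below.) Computing $\nabla\lambda^*$ is the delicate ingredient: for the active constraint $i$ the closed form $\lambda^*=\big(b_i-\va_i\tran\vxk\big)/\big(\va_i\tran\vdeltax\big)$ is differentiated using $\tfrac{\partial\vdeltax}{\partial\vx}=\tfrac1r\big(I-\vdeltax\vdeltax\tran\big)$. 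For fixed $k$ the active index is locally constant except on a measure-zero set of directions (those hitting a face intersection), which is exactly why the Jacobian — and hence $p_z\circ f_k$ — is continuous only almost everywhere.

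With $\tfrac{\partial f_k}{\partial\vx}=cI+UV\tran$ where $U=[\,u_1\;u_2\,],\,V=[\,v_1\;v_2\,]\in\R^{D\times 2}$, I would finish via the matrix determinant lemma: $\det(cI+UV\tran)=c^{D}\det\!\big(I_2+\tfrac1c V\tran U\big)$, reducing the $D\times D$ determinant to a $2\times2$ one. Setting $w_{ij}\triangleq \tfrac1c\,v_i\tran u_j$, the $2\times2$ determinant is $(1+w_{11})(1+w_{22})-w_{12}w_{21}$, which factors as $(1+w_{11})\big[(1+w_{22})-\tfrac{w_{12}w_{21}}{1+w_{11}}\big]$ whenever $1+w_{11}\neq 0$ (equivalently, applying Sherman–Morrison to the two rank-one updates in sequence). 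Taking $\log|\cdot|$ of the product and adding $\log c^{D}=D\log c$ reproduces \Cref{eq:logdet} verbatim.

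The main obstacle is the middle step — obtaining $\nabla\lambda^*$ correctly, since it requires differentiating through both the normalization of $\vdeltax$ and the selection of the active halfspace, and then checking that the pieces assemble into precisely a scaled identity plus (at most) two outer products. Everything downstream — the determinant lemma and the algebraic factorization of the $2\times2$ determinant — is routine; the only points worth stating explicitly are $c>0$ (so that the final term is $D\log c$ with no absolute value) and $1+w_{11}\neq 0$ (which holds wherever $f_k$ is a local diffeomorphism and is what makes the asymmetric factored form well defined).
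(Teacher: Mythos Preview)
Your proof is correct and follows the same high-level strategy as the paper---compute the Jacobian explicitly, recognize a scaled-identity-plus-low-rank structure, and reduce the determinant via the matrix determinant lemma---but your organization is tighter. By first collapsing \Cref{eq:squash} to the radial form $f_k(\vx)=\vxk+s(\vx)(\vx-\vxk)$ you obtain the Jacobian in one line as $sI+(\vx-\vxk)(\nabla s)\tran$, which makes the rank-one nature of the correction transparent; the paper instead expands the chain rule from the original definition through roughly a dozen lines of algebra before collecting terms into the same structure. Your grouping of the rank-one pieces also differs: you take $u_1=u_2=\vx-\vxk$ with $v_1,v_2$ carrying the two partial derivatives of $s$, whereas the paper takes both $u_1,u_2$ as scalar multiples of $\vdeltax$ and sets $v_1=\partial\lambda^*/\partial\vdeltax$, $v_2=\vdeltax$. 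Since the proposition only asserts existence of \emph{some} such decomposition, either choice suffices, and both feed the same $2\times2$ reduction. Finally, the paper applies the matrix determinant lemma twice in sequence (Sherman--Morrison style) rather than passing directly to $\det(I_2+c^{-1}V\tran U)$ as you do; your route is slightly more direct and makes clear that the condition $1+w_{11}\neq 0$ is an artifact of the asymmetric factorization rather than a genuine singularity of the determinant.
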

where $w_{ij} = c^{-1} v_i\tran u_j$.
This expression for the log determinant only requires inner products between vectors in $R^D$. To reduce notational clutter, exact formulas for $c,u_1,u_2,v_1,v_2$ are in \Cref{app:proofs}.
All vectors used in computing \Cref{eq:logdet} are either readily available as intermediate quantities after computing $f_k(x)$ or are gradients of scalar functions and can be efficiently computed through reverse-mode automatic differentiation. The only operations on these vectors are dot products, and no large $D$-by-$D$ matrices are ever constructed. Compared to explicitly constructing the Jacobian matrix, this is more efficient in both compute and memory, and can readily scale to large values of $D$.

\section{Application settings}

\begin{figure}
    \centering
    \begin{subfigure}[b]{0.25\linewidth}
    \includegraphics[width=\linewidth]{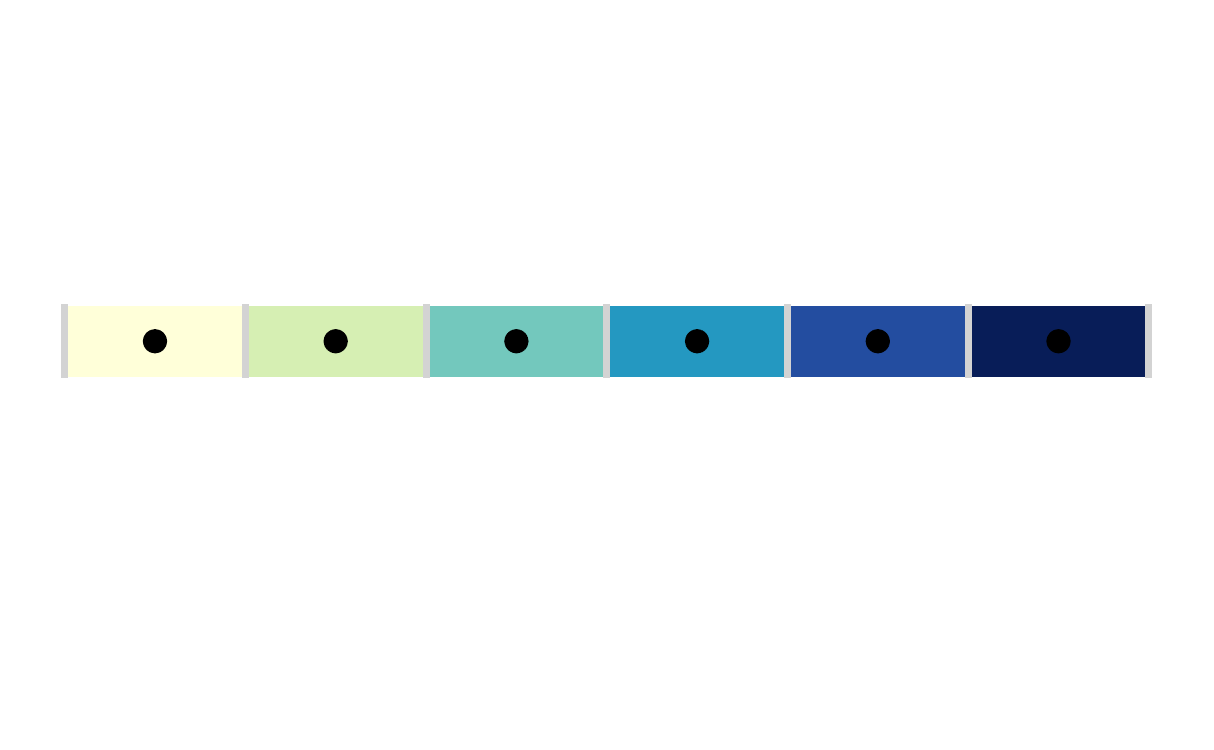}
    \vspace{-1.8em}
    \caption*{Ordinal}
    \end{subfigure}%
    \begin{subfigure}[b]{0.25\linewidth}
    \includegraphics[width=\linewidth]{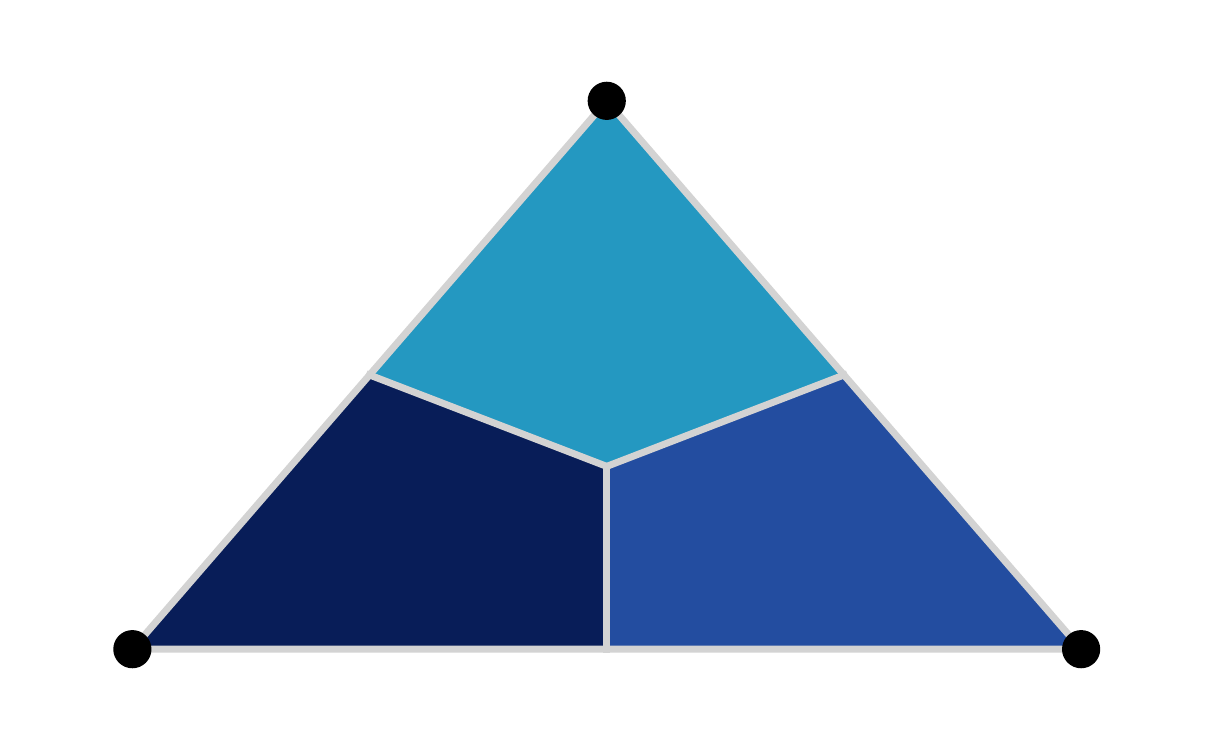}
    \vspace{-1.8em}
    \caption*{Simplex}
    \end{subfigure}%
    \begin{subfigure}[b]{0.25\linewidth}
    \includegraphics[width=\linewidth]{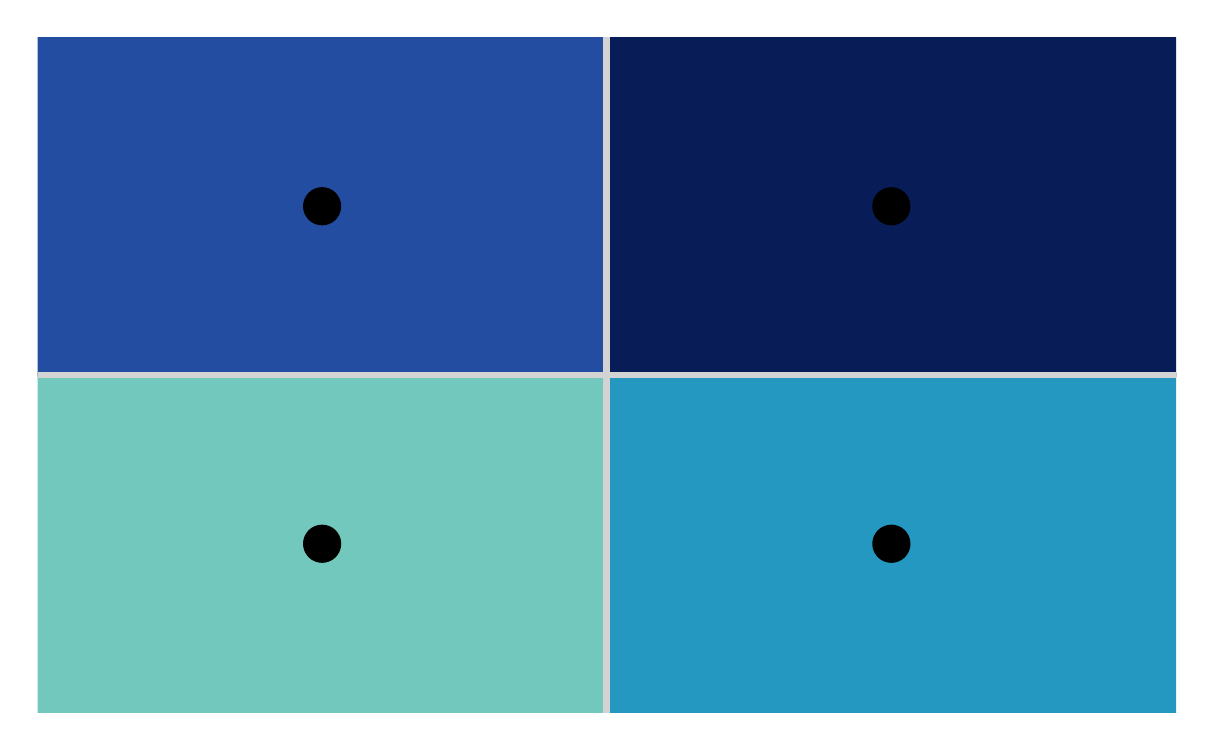}
    \vspace{-1.8em}
    \caption*{BinaryArgmax}
    \end{subfigure}%
    \begin{subfigure}[b]{0.25\linewidth}
    \includegraphics[width=\linewidth]{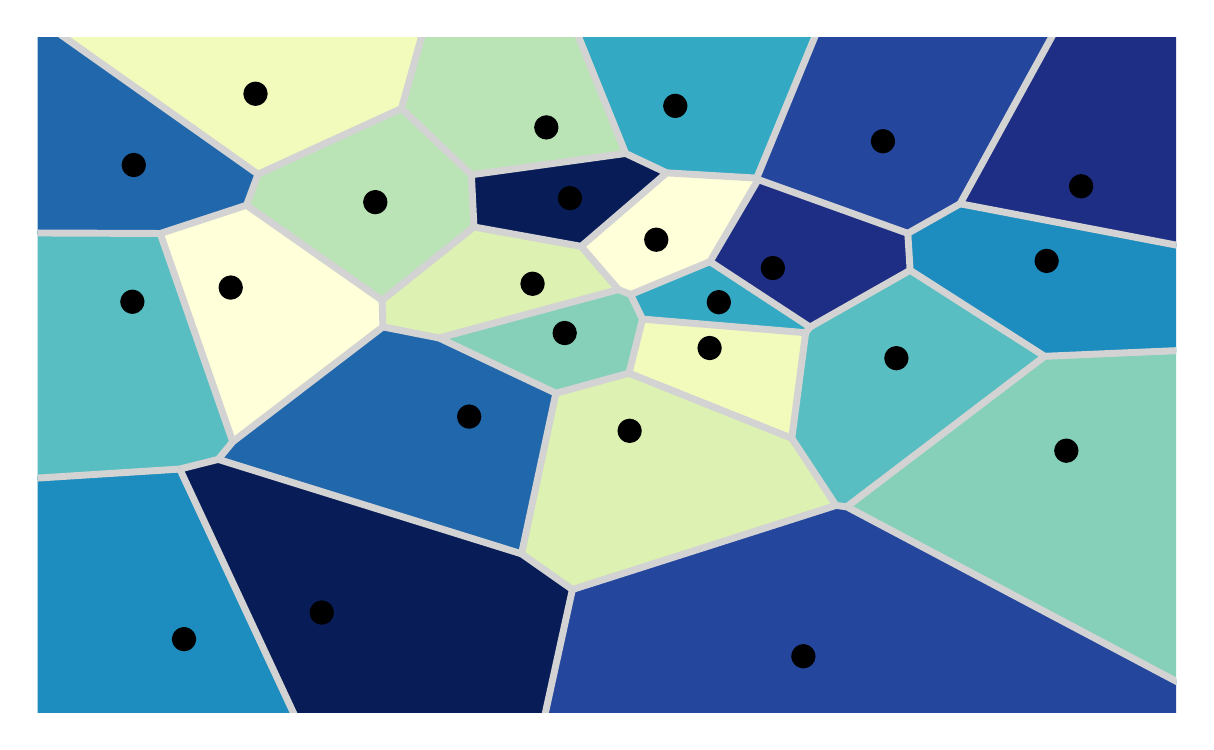}
    \vspace{-1.8em}
    \caption*{Voronoi}
    \end{subfigure}%
    \caption{Existing dequantization methods can be seen as special cases of Voronoi dequantization with fixed anchor points. In additional to decoupling the dimension of the embedding space from the number of discrete values, the Voronoi dequantization can learn to model the similarities of discrete values through the positions and boundaries of their Voronoi cells.}
    \label{fig:dequant_comparison}
\end{figure}

We discuss two applications of our method to likelihood-based modeling of discrete and continuous data. The first is a new approach to dequantization which allows training a model of discrete data using density models that normally only act on continuous variables such as normalizing flows. Compared to existing dequantization methods~\citep{dinh2016density,potapczynski2019invertible,hoogeboom2021argmax}, the Voronoi dequantization is not restricted to fixed dimensions and can benefit from learning similarities between discrete values. In fact, existing approaches can be seen as special cases of a Voronoi dequantization.

The second is a formulation of disjoint mixture models, where each component lies strictly within each subset of a Voronoi tessellation. To the best of our knowledge, disjoint mixture models have not been explored significantly in the literature and have not been successfully applied in more than a couple dimensions. Compared to an existing method~\citep{dinh2019rad}, the Voronoi mixture model is not restricted to acting individually for each dimension.

\paragraph{Voronoi dequantization}

Let $y$ be a discrete random variable with finite support $\mathcal{Y}$. Then we can use tessellation to define subsets for dequantizing $y$ as long as there are at least as many Voronoi cells---equivalently, anchors points---as the number of discrete values, \ie $K\geq |\mathcal{Y}|$. By assigning each discrete value to a Voronoi cell, we can then define a dequantization model $q(\vx|y)$ by first sampling from a base distribution $\vz \sim q(\vz| y)$ in $\R^D$ and then applying the mapping $\vx = f_k(\vz)$ from \Cref{sec:vorproj} to construct a distribution over $V_k$. We can then obtain probabilities $q(\vx|y)$ efficiently and train the dequantization alongside the density model $p(\vx)$ on $\R^D$. 

Sampling from the model $p(y|\vx)$ is straightforward and deterministic after sampling $\vx$. We can write $y = g(\vx)$ where $g$ is the set identification function satisfying $\vx \in V_{g(\vx)}$. From \Cref{eq:vorteldef}, it is easy to see that $g(\vx)$ is the nearest neighbor operation, $g(\vx) = \argmin_k \norm{\vx - \vxk}$.

We are free to choose the number of dimensions $D$, where a smaller $D$ assigns less space for each Voronoi cell induces dequantized distributions that are easier to fit, while a larger $D$ allows the anchor points and Voronoi cells more room to change over the course of training.
When the anchor points are fixed to specific positions, we can recover the disjoint subsets used by prior methods as special cases. We illustrate this in \Cref{fig:dequant_comparison}.

\paragraph{Voronoi mixture models}

Let $\{V_k\}$ be a Voronoi tessellation of $\R^D$. Then a disjoint mixture model can be constructed by defining distributions on each Voronoi cell. Here we make use of the inverse mapping $f_k^{-1}: V_k \rightarrow \R^D$ (discussed in \Cref{sec:inverse_mapping}) so that we only need to parameterize distributions over $\R^D$. Let $\vx$ be a point in $\R^D$, our Voronoi mixture model assigns the density
\begin{equation}\label{eq:mixture}
    \log p_{\text{mix}}(\vx) = \log p_{\text{comp}}( f_{k=g(\vx)}^{-1}(\vx) | k=g(\vx))
    + \log \left| \det \tfrac{\partial f_k^{-1}(\vx)}{\partial \vx} \right|
    + \log p(k=g(\vx))
\end{equation}
where $p_{\text{comp}}$ can be any distribution over $\R^D$, including another disjoint mixture model. This can easily be composed with normalizing flow layers where, in addition to the change of variable due to $f_k^{-1}$, we also apply the change in density resulting from choosing one out of $K$ components. 

\section{Related Work}

\paragraph{Normalizing flows for discrete data} Invertible mappings have been proposed for discrete data, where discrete values are effectively rearranged from a factorized distribution. In order to parameterize the transformation, \citet{hoogeboom2019integer,van2020idf++} use quantized ordinal transformations, while \citet{tran2019discrete} takes a more general approach of using modulo addition on one-hot vectors. These approaches suffer from gradient bias due to the need to use discontinuous operations and do not have universality guarantees since it's unclear whether simple rearrangement is sufficient to transform any joint discrete distribution into a factorized distribution. In contrast, the dequantization approach provides a universality guarantee since the lower bound in \Cref{eq:dequant_elbo} is tight when $p(\vx)\1_{[\vx \in A_{y}]} \propto q(\vx | y)$, with a proportionality equal to $p(y)$.

\paragraph{Dequantization methods} Within the normalizing flows literature, the act of adding noise was originally used for ordinal data as a way to combat numerical issues~\citep{uria2013rnade,dinh2016density}. Later on, appropriate dequantization approaches have been shown to lower bound the log-likelihood of a discrete model~\citep{theis2015note,ho2019flowpp}. 
For non-ordinal data, many works have proposed simplex-based approaches. 
Early works on relaxations~\citep{jang2016categorical,maddison2016concrete} proposed continuous distribution on the simplex that mimic the behaviors of discrete random variables; however, these were only designed for the use with a Gumbel base distribution. 
\citet{potapczynski2019invertible} extend this to a Gaussian distribution---although it is not hard to see this can work with any base distribution---by designing invertible transformations between $R^{D}$ and the probability simplex with $K$ vertices, with $D=K - 1$, where $K$ is the number of classes of a discrete random variable. 

Intuitively, after fixing one of the logits, the softmax operation is an invertible transformation on the bounded domain $\Delta^{K-1} = \{\vx \in \R^K: \sum_i^K \vx_i = 1, \vx_i > 0\}$. The $(K-1)$-simplex can then be broken into $K$ subsets, each corresponding to a particular discrete value.
\begin{equation}
    A_k^{\text{simplex}} = \{\vx \in \Delta^{K-1} : \vx_k > \vx_i \;\forall i\neq k\}.
\end{equation}
More recently, \citet{hoogeboom2021argmax} proposed ignoring the simplex constraint and simply use
\begin{equation}
    A_k^{\text{argmax}} = \{\vx \in \R^K : \vx_k > \vx_i \;\forall i\neq k\},
\end{equation}
which effectively increases the number of dimensions by one compared to the simplex approach. However, both approaches force the dimension of the continuous space $D$ to scale with $K$. 
In order to make their approach work when $K$ is large, \citep{hoogeboom2021argmax} propose binarizing the data as a preprocessing step.
In contrast, Voronoi dequantization has full flexibility in choosing $D$ regardless of $K$.

\paragraph{Related models for discrete data} Among other related methods, a recent work proposed normalizing flows on truncated supports~\citep{tan2022learning} but had to resort to rejection sampling for training. Furthermore, their subsets are not disjoint by construction.
Prior works~\citep{ma2019flowseq,lippe2021categorical} also proposed removing the constraint that subsets are disjoint, and instead work with general mixture models with unbounded support, relying on the conditional model $p(y|\vx)$ being sufficiently weak so that the task of modeling is forced onto a flow-based prior. 
They have achieved good performance on a number of tasks,
similar to general approaches that combine normalizing flows with variational inference~\citep{ziegler2019latent,huang2020augmented,nielsen2020survae}. However, they lose the computational savings and the deterministic decoder $p(y|\vx)$ gained from using disjoint subsets. On the other hand, quantization based on nearest neighbor have been used for learning discrete latent variables~\citep{oord2017neural,razavi2019generating}, but no likelihoods are constructed, the boundaries are not explicitly differentiated through, and the model relies on training with heuristical updates.

\paragraph{Disjoint mixture models}

The computational savings from using disjoint subsets was pointed out by \citet{dinh2019rad}. However, their method only works in each dimension individually. They transform samples using a linear spline, which is equivalent to creating subsets based on the knots of the spline and applying a linear transformation within each subset. 
Furthermore, certain parameterizations of the spline can lead to discontinuous density functions, whereas our disjoint mixture has a density function that is continuous almost everywhere (albeit exactly zero on the boundaries; see \Cref{sec:conclusion} for more discussion).
The use of monotonic splines have previously been combined with normalizing flows~\citep{durkan2019neural}, but interestingly, since the splines in \citet{dinh2019rad} are not enforced to be monotonic, the full transformation is not bijective and acts like a disjoint mixture model. Ultimately, their experiments were restricted to two-dimensional syntheic data sets, and it remained an interesting research question whether disjoint mixture models can be successfully applied in high dimensions.

\begin{figure}
\begin{minipage}[b]{.66\linewidth}
    \centering
    \includegraphics[width=0.27\linewidth]{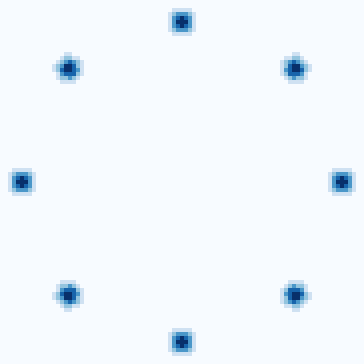}
    \includegraphics[width=0.27\linewidth]{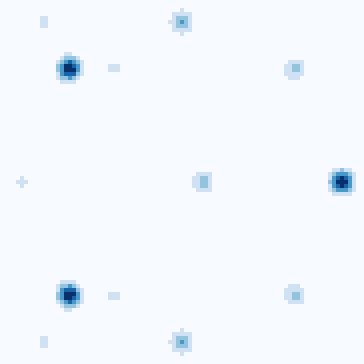}
    \includegraphics[width=0.27\linewidth]{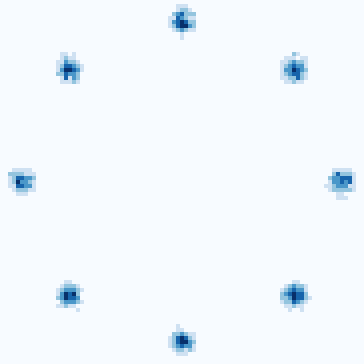}\\
    \begin{subfigure}[b]{0.27\linewidth}
        \includegraphics[width=\linewidth, trim=0px 10px 0px 10px, clip]{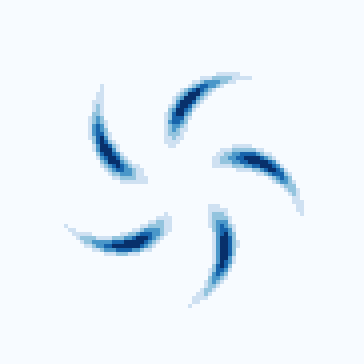}
        \caption*{Target PMF}
    \end{subfigure}
    \begin{subfigure}[b]{0.27\linewidth}
        \includegraphics[width=\linewidth, trim=0px 10px 0px 10px, clip]{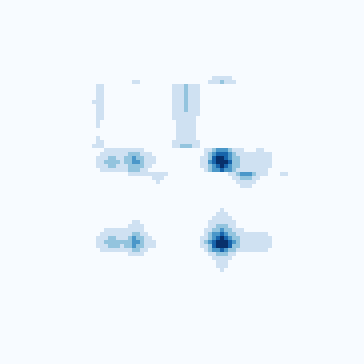}
        \caption*{Discrete Flow}
    \end{subfigure}
    \begin{subfigure}[b]{0.27\linewidth}
        \includegraphics[width=\linewidth, trim=0px 10px 0px 10px, clip]{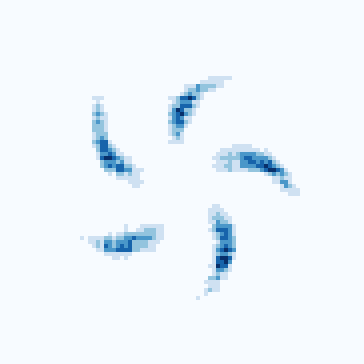}
        \caption*{Voronoi Flow}
    \end{subfigure}\\
    \caption{\textbf{Quantized 2D data.} Voronoi dequantization can model complex relations between discrete values. No knowledge of ordering is given to the models.
    }
    \label{fig:discrete2d}
\end{minipage}
\hfill
\begin{minipage}[b]{.3\linewidth}
    \centering
    \begin{subfigure}[b]{0.84\linewidth}
    \includegraphics[width=\linewidth]{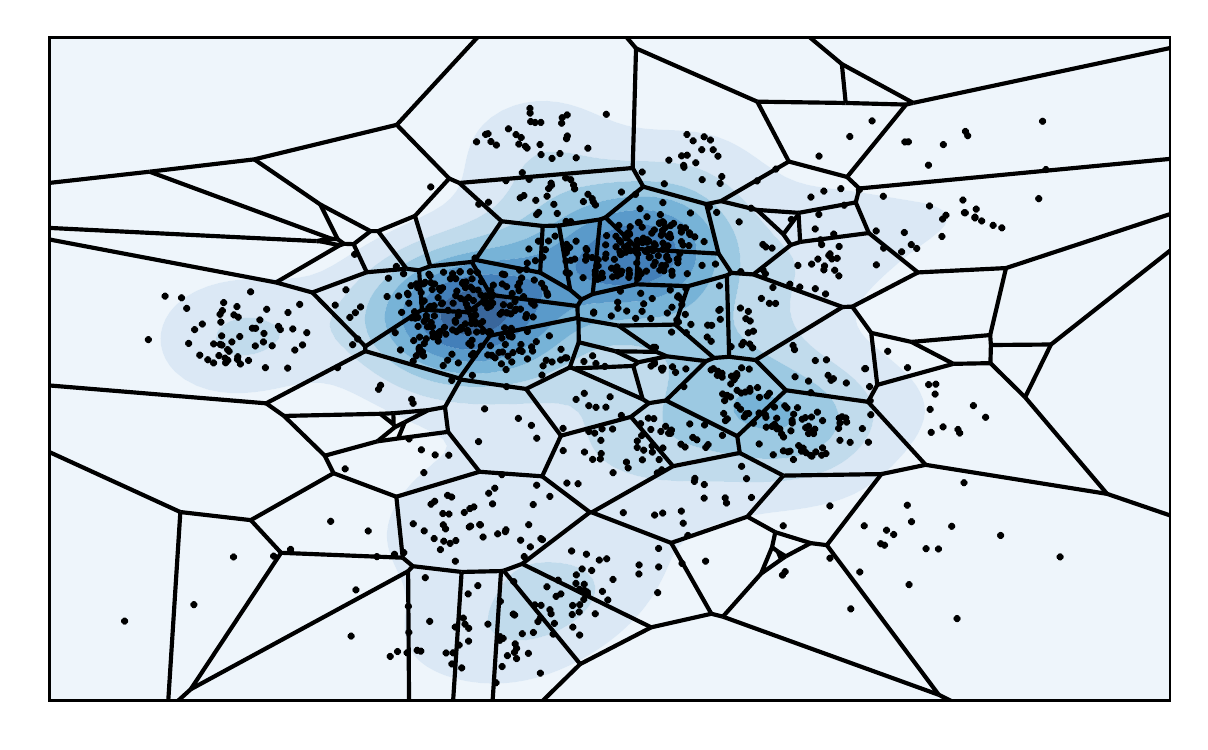}
    \end{subfigure}\\
    \begin{subfigure}[b]{0.84\linewidth}
    \includegraphics[width=\linewidth]{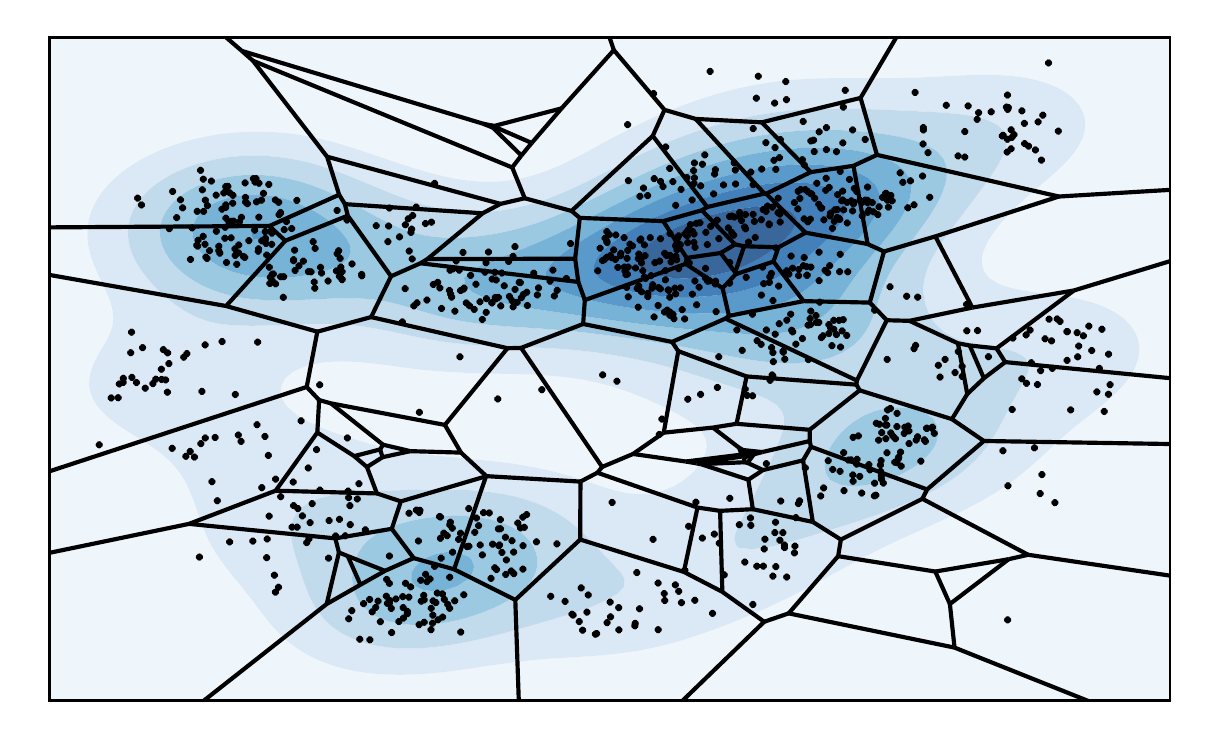}
    \caption*{Dequantized samples}
    \end{subfigure}
    \caption{The model learns to cluster discrete values with similar probability values.} 
    \label{fig:2d_vorsamples}
\end{minipage}
\end{figure}

\section{Experiments}

We experimentally validate our semi-discrete approach of combining Voronoi tessellation with likelihood-based modeling on a variety of data domains: discrete-valued UCI data sets, itemset modeling, language modeling, and disjoint mixture modeling for continuous UCI data.
The goal of these experiments is not to show state-of-the-art results in these domains but to showcase the relative merit of Voronoi tessellation compared to existing methods. For this reason, we just use simple coupling blocks with affine transformations \citep{dinh2014nice,dinh2016density} as a base flow model in our experiments, unless stated otherwise. 
When comparing to Discrete Flows~\citep{tran2019discrete}, we use their bipartite layers.
Details regarding preprocessing for data sets can be found in \Cref{app:datasets}, and detailed experimental setup is in \Cref{app:experiment_details}. All tables, when shown, display standard deviations across three random seeds. Open source code is available at \url{https://github.com/facebookresearch/semi-discrete-flow}.

\subsection{Quantized 2D data}

We start by considering synthetic distributions over two discrete variables, to qualitatively compare against Discrete Flows~\citep{tran2019discrete}. \Cref{fig:discrete2d} shows the target probability mass functions, which are created by a 2D distribution where each dimension is quantized into 91 discrete values. Though the data is ordinal, no knowledge of ordering is given to the models. 
We see that Discrete Flows has trouble fitting these, because it is difficult to rearrange the target distribution into a factorized distribution. 
For our model, we dequantize each discrete variable with a Voronoi tessellation in $\R^2$. We then learn a flow model on the combined $\R^4$, parameterized by multilayer perceptrons (MLPs).
In \Cref{fig:2d_vorsamples} we visualize the learned Voronoi tessellation and samples from our model. The learned tessellation seems to group some of discrete values that occur frequently together, to reduce the number of modes.

\subsection{Discrete-valued UCI data}

\begin{table}
\caption{\textbf{Discrete UCI data sets.} Negative log-likelihood results on the test sets in nats.}
\label{tab:uci_results}
\centering
\ra{1.2}
\setlength{\tabcolsep}{2.5pt}
\resizebox{\linewidth}{!}{%
\begin{tabular}{@{} l r r r r r r r r r r r @{}}\toprule
  Method &
  {\bf Connect4} & &
  {\bf Forests} & & 
  {\bf Mushroom} & &
  {\bf Nursery} & &
  {\bf PokerHands} & &
  {\bf USCensus90} \\
\midrule
Voronoi Deq. & 
\cellhi 12.92{\tiny $\pm$0.07} & &
\cellhi 14.20{\tiny $\pm$0.05} & &
\cellhi 9.06{\tiny $\pm$0.05} & &
\cellhi 9.27{\tiny $\pm$0.04} & &
\cellhi 19.86{\tiny $\pm$0.04} & &
\cellhi 24.19{\tiny $\pm$0.12} \\
Simplex Deq. & 
13.46{\tiny $\pm$0.01} & &
16.58{\tiny $\pm$0.01} & &
9.26{\tiny $\pm$0.01} & &
9.50{\tiny $\pm$0.00} & &
19.90{\tiny $\pm$0.00} & &
28.09{\tiny $\pm$0.08} \\
BinaryArgmax Deq. &  
13.71{\tiny $\pm$0.04} & &
16.73{\tiny $\pm$0.17} & &
9.53{\tiny $\pm$0.01} & &
9.49{\tiny $\pm$0.00} & &
19.90{\tiny $\pm$0.01} & &
27.23{\tiny $\pm$0.02} \\
Discrete Flow &  
19.80{\tiny $\pm$0.01} & &
21.91{\tiny $\pm$0.01} & &
22.06{\tiny $\pm$0.01} & &
9.53{\tiny $\pm$0.01} & &
\cellhi 19.82{\tiny $\pm$0.03} & &
55.62{\tiny $\pm$0.35} \\
\bottomrule
\end{tabular}
}
\end{table}

We experiment with complex data sets where each discrete variable can have a varying number of classes. 
Furthermore, these discrete variables may have hidden semantics. 
To this end, we use unprocessed data sets from the UCI database~\citep{UCI}. The only pre-processing we perform is to remove variables that only have one discrete value. 
We then take 80\% as train, 10\% as validation, and 10\% as the test set. 
Most of these data sets have a combination of both ordinal and non-ordinal variables, and we expect the non-ordinal variables to exhibit relations that are unknown (\eg spatial correlations). 

We see that Discrete Flows can be competitive with dequantization approaches, but can also fall short on more challenging data sets such as the \texttt{USCensus90}, the largest data set we consider with 2.7 million examples and 68 different discrete variables of varying types. For dequantization methods, simplex~\citep{potapczynski2019invertible} and binary argmax~\citep{hoogeboom2021argmax} approaches are mostly on par.
We do see a non-negligible gap in performance between these baselines and Voronoi dequantization for most of the data sets, likely due to the ability to learn semantically useful relations between the values of each discrete variable. For instance, the \texttt{Connect4} data set contains positions of a two-player board game with the same name, which exhibit complex spatial dependencies between the board pieces, and the \texttt{USCensus90} data set contains highly correlated variables and is often used to test clustering algorithms.

\subsection{Permutation-invariant itemset modeling}

\begin{table}
\begin{minipage}[t]{.65\linewidth}
\caption{\textbf{Permutation-invariant discrete itemset modeling.}}
\label{tab:itemsets_results}
\centering
\ra{1.2}
\resizebox{0.97\linewidth}{!}{%
\begin{tabular}{@{} l r r r r r r @{}}\toprule
Model (Dequantization) & \hspace{0.5em} & \textbf{Retail} (nats) & & \textbf{Accidents} (nats) & \tn
\midrule
CNF (Voronoi) & &
\cellhi 9.44{\tiny $\pm$2.34} & &
\cellhi 7.81{\tiny $\pm$2.84} & \tn
CNF (Simplex) & &
24.16{\tiny $\pm$0.21} & &
19.19{\tiny $\pm$0.01} & \tn
CNF (BinaryArgmax) & &
10.47{\tiny $\pm$0.42} & &
\cellhi 6.72{\tiny $\pm$0.23} & \tn
\addlinespace[0.5pt]
\hdashline
\addlinespace[0.5pt]
Determinantal Point Process & &
20.35{\tiny $\pm$0.05} & &
15.78{\tiny $\pm$0.04} & \tn
\bottomrule
\end{tabular}
}
\end{minipage}
\begin{minipage}[t]{.35\linewidth}
\caption{\textbf{Language modeling.}}
\label{tab:charlm_results}
\centering
\ra{1.2}
\setlength{\tabcolsep}{2.5pt}
\resizebox{0.97\linewidth}{!}{%
\begin{tabular}{@{} l r r r r r r @{}}\toprule
Dequantization & & {\bf text8} (bpc) & & {\bf enwik8} (bpc) \tn
\midrule
Voronoi ($D$=2) & &
1.39{\tiny $\pm$0.01} & &
1.46{\tiny $\pm$0.01} \tn
Voronoi ($D$=4) & &
1.37{\tiny $\pm$0.00} & &
1.41{\tiny $\pm$0.00} \tn
Voronoi ($D$=6) & &
1.37{\tiny $\pm$0.00} & &
1.40{\tiny $\pm$0.00} \tn
Voronoi ($D$=8) & &
\cellhi 1.36{\tiny $\pm$0.00} & &
\cellhi 1.39{\tiny $\pm$0.01} \tn
\addlinespace[0.5pt]
\hdashline
\addlinespace[0.5pt]
BinaryArgmax \citep{hoogeboom2021argmax} & &
1.38 & &
1.42 \tn
Ordinal \citep{hoogeboom2021argmax} & &
1.43 & &
1.44 \tn
\bottomrule
\end{tabular}
}
\end{minipage}
\end{table}

An appeal of using normalizing flows in continuous space is the ability to incorporate invariance to specific group symmetries into the density model. For instance, this can be done by ensuring the ordinary differential equation is equivariant \citep{kohler2020equivariant,bilovs2021scalable,satorras2021n} in a continuous normalizing flow  \cite{chen2018neural,grathwohl2018ffjord} framework. Here we focus on invariance with respect to permutations, \ie sets of discrete variables. This invariance cannot be explicitly modeled by Discrete Flows as they require an ordering or a bipartition of discrete variables. We preprocessed a data set of retail market basket data from an anonymous Belgian retail store~\citep{brijs99using} and a data set of anonymized traffic accident data~\citep{geurts03using}, which contain sets of discrete variables each with 765 and 269 values, respectively. Without the binarization trick, simplex dequantization must use a large embedding dimensions and performs worse than a determinental point process baseline~\citep{kulesza2012determinantal}. Meanwhile, our Voronoi dequantization has no problems staying competitive as its embedding space is freely decoupled from the number of discrete values.

\subsection{Language modeling}

Language modeling a widely used benchmark for discrete models~\citep{tran2019discrete,lippe2021categorical,hoogeboom2021argmax}. 
Here we used the open source code provided by \citet{hoogeboom2021argmax} with the exact same autoregressive flow model and optimizer setups. 
The only difference is replacing their binary argmax with our Voronoi dequantization. 
Results are shown in \Cref{tab:charlm_results}, where we tried out multiple embedding dimensions $D$. 
Generally, we find $D=2$ to be too low and can stagnate training since the Voronoi cells are more constrained, while larger values of $D$ improve upon argmax dequantization.

\subsection{Voronoi mixture models} We test disjoint mixture modeling with Voronoi tessellation on continuous data sets. \Cref{fig:disjoint2d} shows results of training a simple normalizing flow on 2D data, which has trouble completely separating all the modes. Adding in a Voronoi mixture allows us to better fit these multimodal densities. 

To test Voronoi mixture models on higher dimensions, we apply our approach to data sets preprocessed by \citet{papamakarios2017masked}, which range from $6$ to $63$ dimensions. We add a disjoint mixture component 
after a few layers of normalizing flows, with each component modeled by a normalizing flow conditioned on a embedding of the component index. The number of mixture components ranges from $16$ to $64$, with complete sweeps and hyperparameters detailed in \Cref{app:experiment_details}. For comparison, we also show results from the baseline coupling flow that uses the same number of total layers as the disjoint mixture, as well as a strong but computationally costly density model FFJORD~\citep{grathwohl2018ffjord}. From \Cref{tab:disjoint_uci}, we see that the disjoint mixture model approach allows us to increase complexity quite a bit, with almost no additional cost compared to the baseline flow model.

\begin{figure}
\centering
\begin{subfigure}[b]{0.123\linewidth}
    \includegraphics[width=\linewidth, trim=15px 15px 15px 15px, clip]{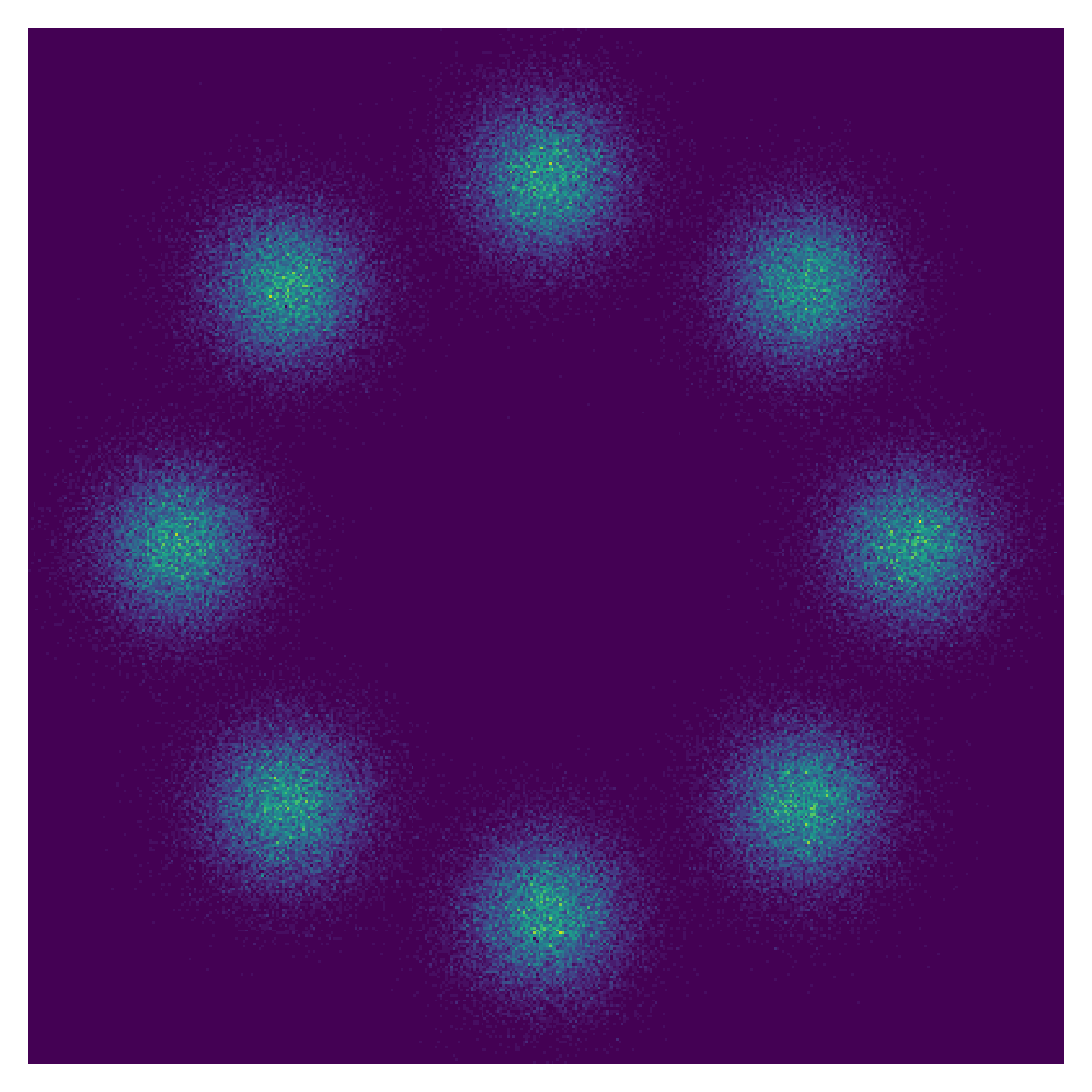}
    \caption*{Data}
\end{subfigure}%
\begin{subfigure}[b]{0.123\linewidth}
    \includegraphics[width=\linewidth, trim=15px 15px 15px 15px, clip]{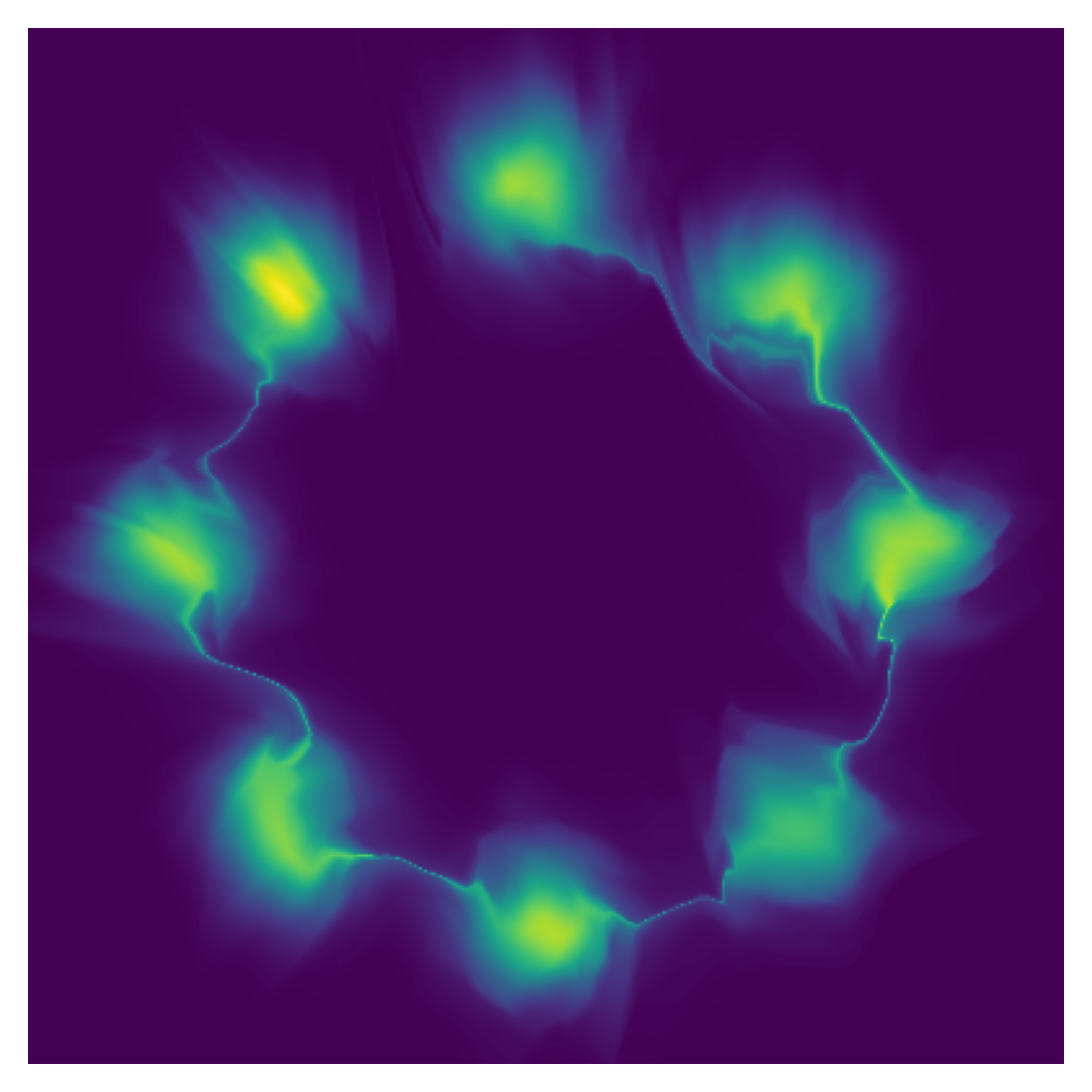}
    \caption*{Baseline}
\end{subfigure}%
\begin{subfigure}[b]{0.123\linewidth}
    \includegraphics[width=\linewidth, trim=15px 15px 15px 15px, clip]{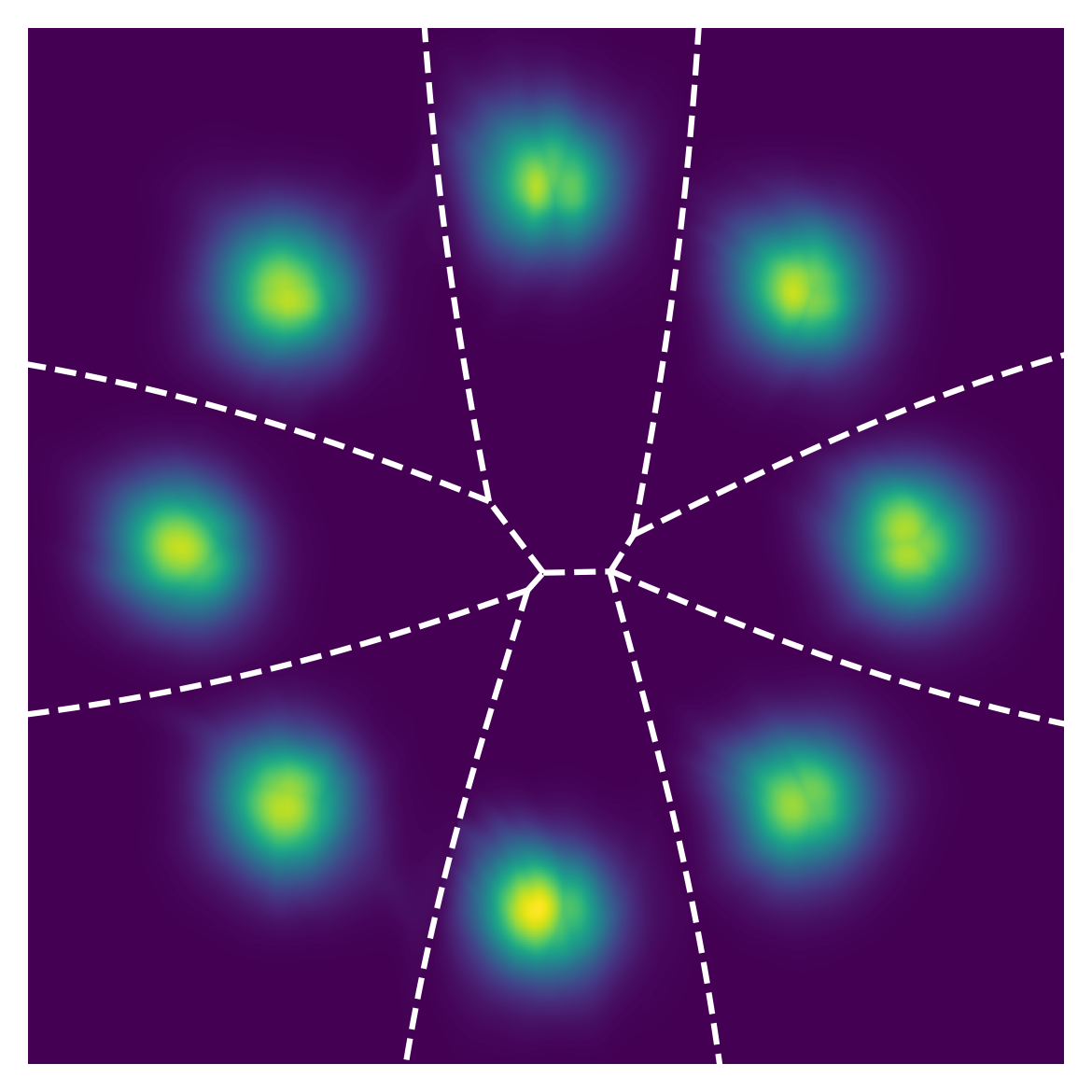}
    \caption*{Voronoi}
\end{subfigure}%
\begin{subfigure}[b]{0.123\linewidth}
    \includegraphics[width=\linewidth, trim=15px 15px 15px 15px, clip]{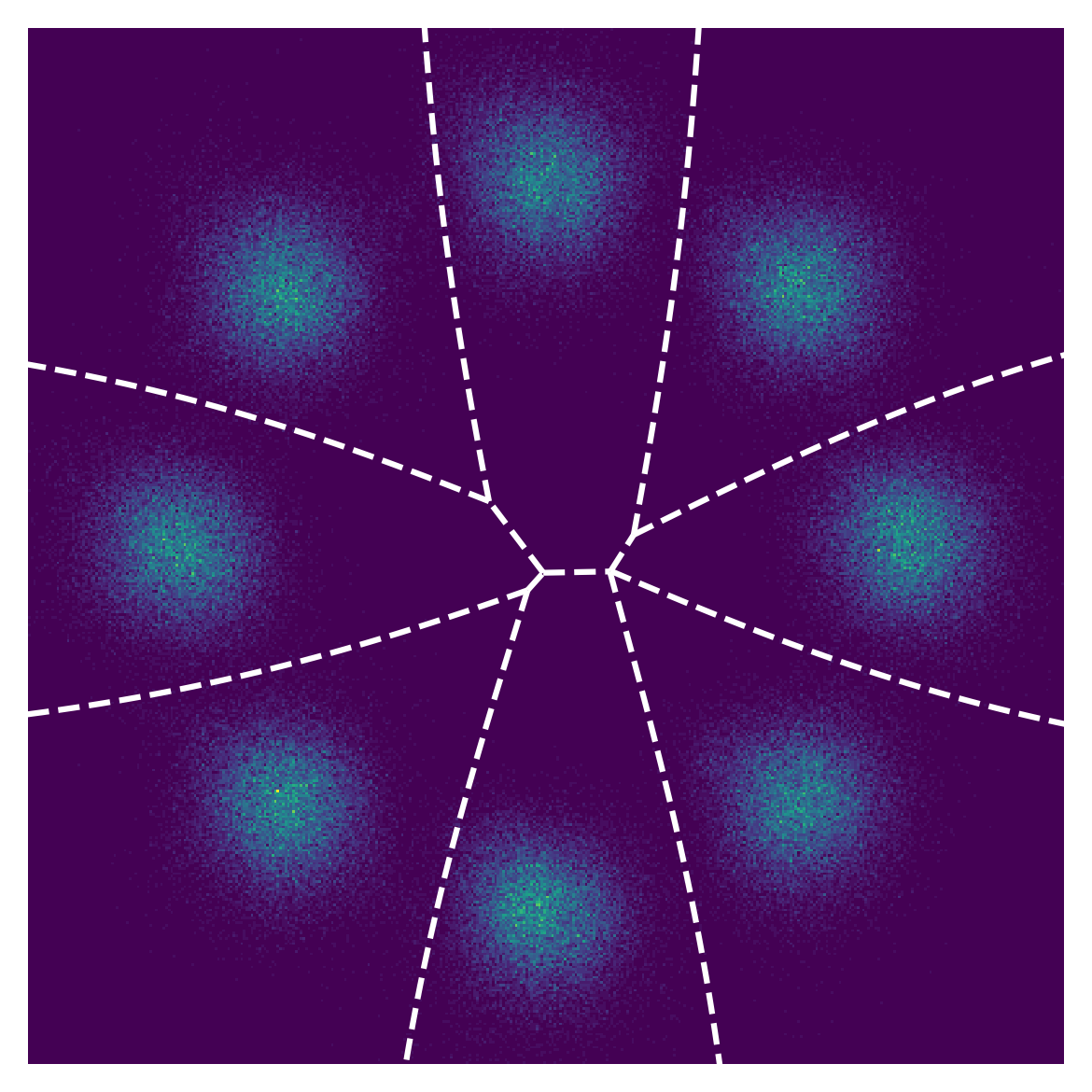}
    \caption*{Samples}
\end{subfigure}
\begin{subfigure}[b]{0.123\linewidth}
    \includegraphics[width=\linewidth, trim=15px 15px 15px 15px, clip]{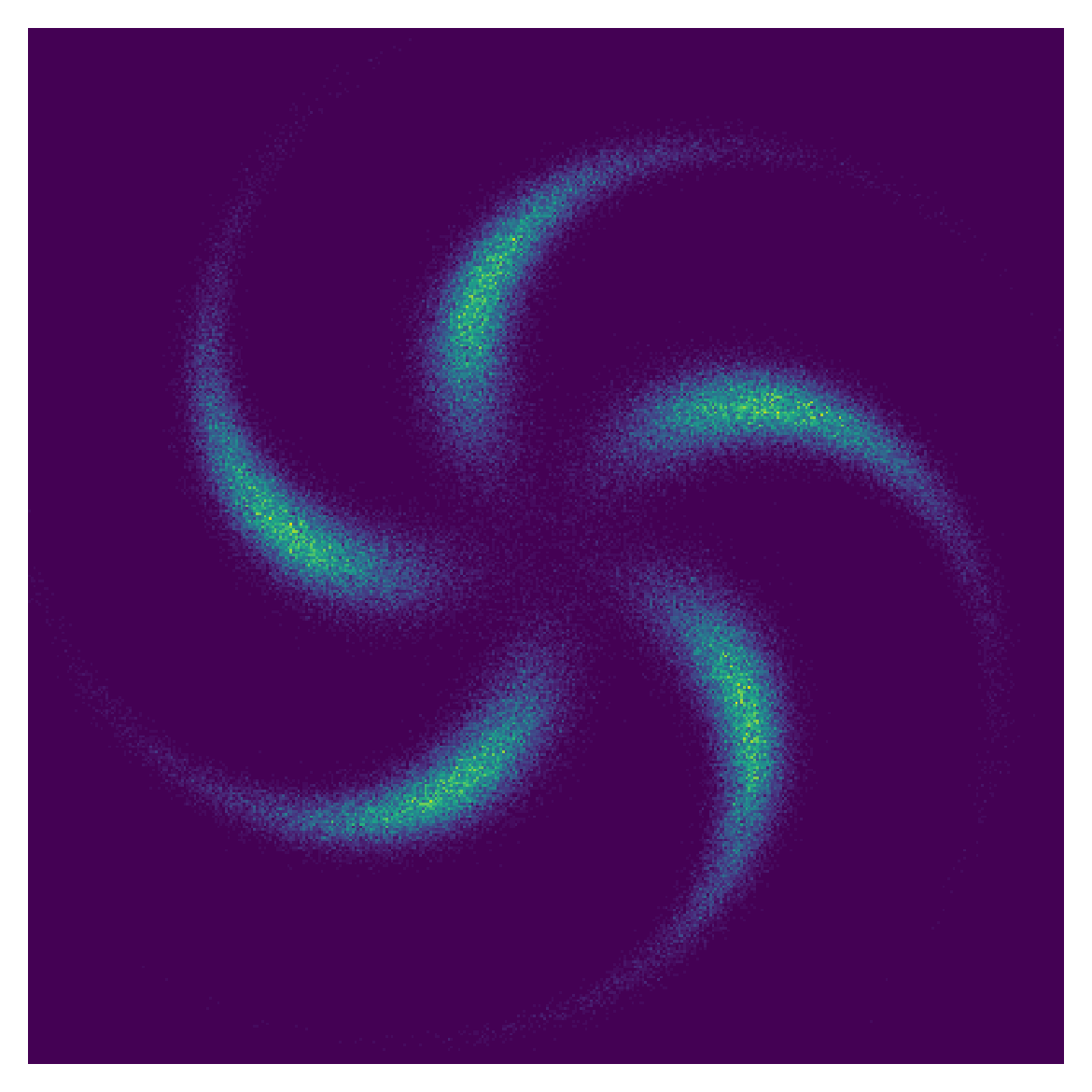}
    \caption*{Data}
\end{subfigure}%
\begin{subfigure}[b]{0.123\linewidth}
    \includegraphics[width=\linewidth, trim=15px 15px 15px 15px, clip]{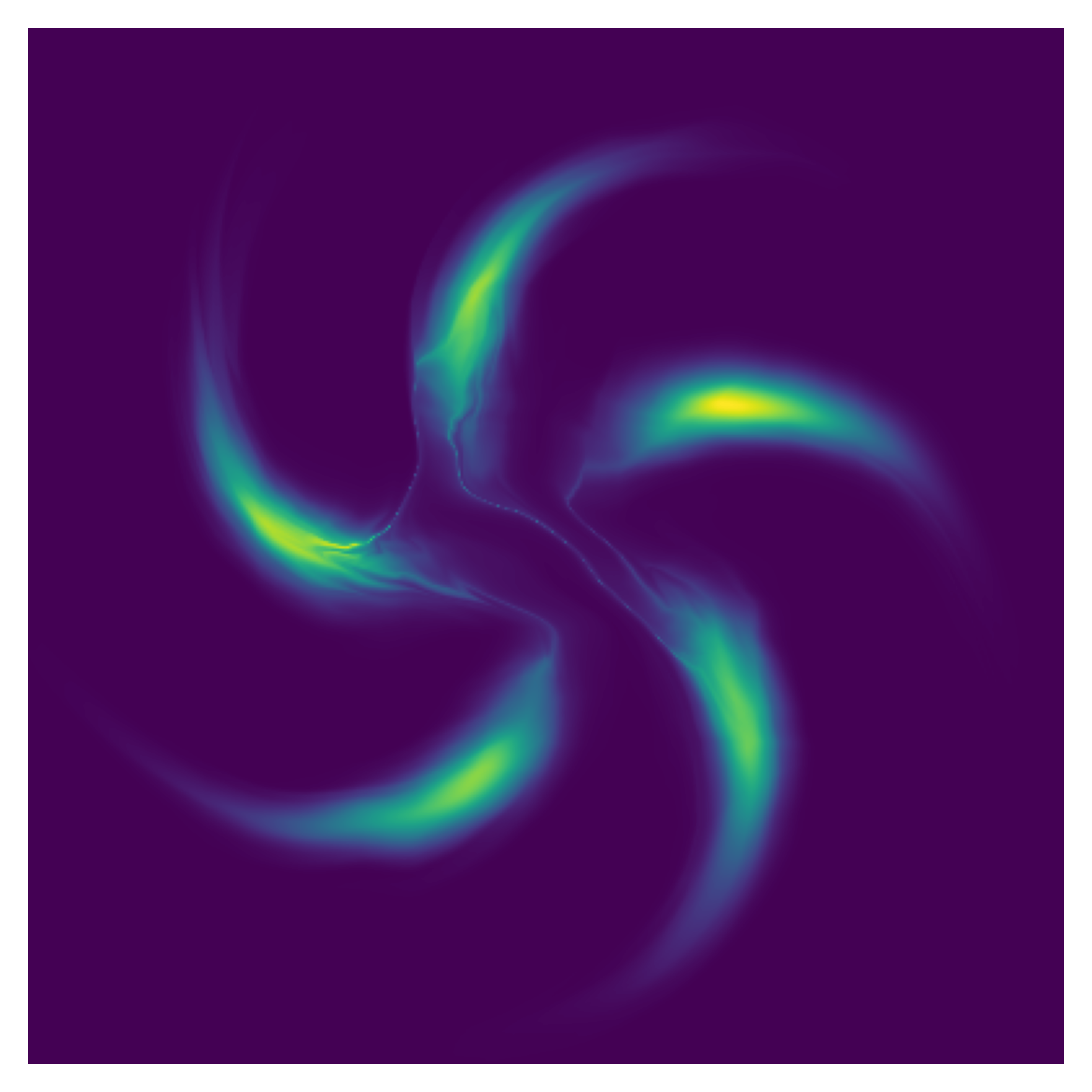}
    \caption*{Baseline}
\end{subfigure}%
\begin{subfigure}[b]{0.123\linewidth}
    \includegraphics[width=\linewidth, trim=15px 15px 15px 15px, clip]{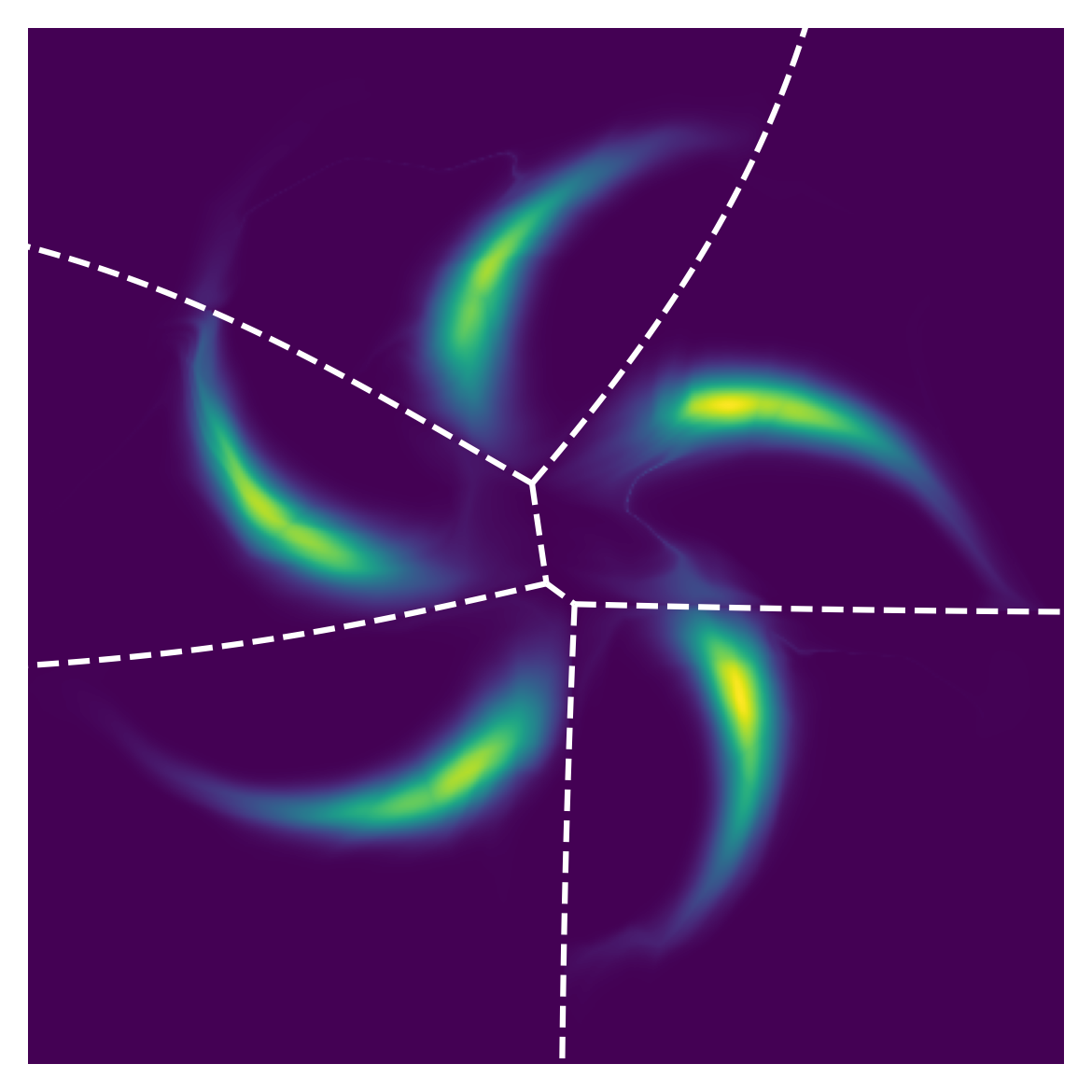}
    \caption*{Voronoi}
\end{subfigure}%
\begin{subfigure}[b]{0.123\linewidth}
    \includegraphics[width=\linewidth, trim=15px 15px 15px 15px, clip]{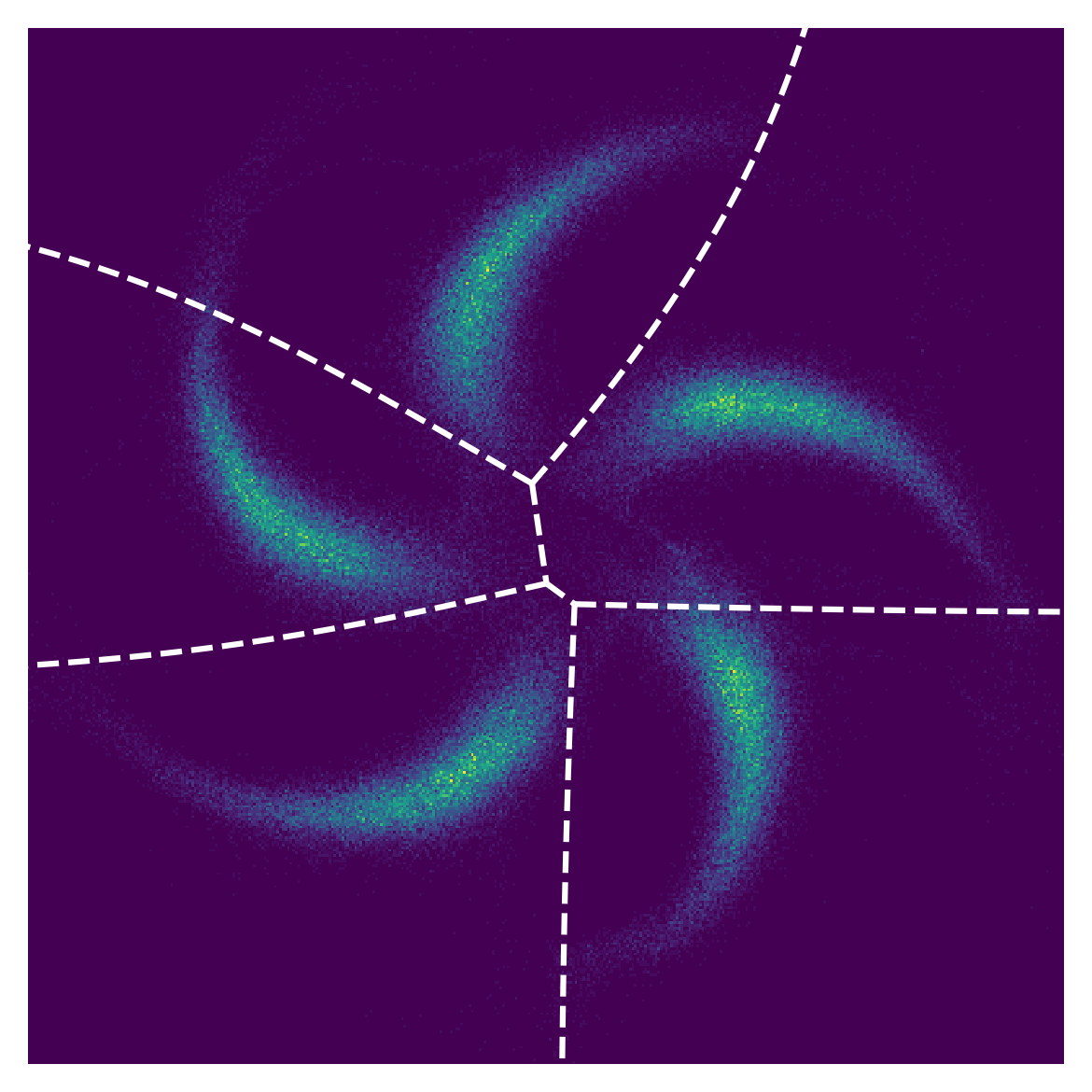}
    \caption*{Samples}
\end{subfigure}
\caption{Tessellation is done in a transformed space; nonlinear boundaries are shown.}
\label{fig:disjoint2d}
\end{figure}

\begin{table}
\caption{\textbf{Disjoint mixture modeling.} NLL on the test sets in nats. $^*$Baseline results from \citep{papamakarios2017masked,grathwohl2018ffjord}.}
\label{tab:disjoint_uci}
\centering
\ra{1.2}
\setlength{\tabcolsep}{2.5pt}
\small
\begin{tabular}{@{} l r r r r r r r r r r @{}}\toprule
Method & &
  {\bf POWER} & &
  {\bf GAS} & & 
  {\bf HEPMASS} & &
  {\bf MINIBOONE} & &
  {\bf BSDS300} \tn
\midrule
Real NVP$^*$ & &
-0.17{\tiny $\pm$0.01} & &
-8.33{\tiny $\pm$0.07} & &
18.71{\tiny $\pm$0.01} & &
13.55{\tiny $\pm$0.26} & &
-153.28{\tiny $\pm$0.89} \tn 
MAF$^*$ & &
-0.24{\tiny $\pm$0.01} & &
-10.08{\tiny $\pm$0.01} & &
17.73{\tiny $\pm$0.01} & &
12.24{\tiny $\pm$0.22} & &
-154.93{\tiny $\pm$0.14} \tn 
FFJORD$^*$ & &
-0.46{\tiny $\pm$0.01} & &
-8.59{\tiny $\pm$0.12} & &
\cellhi 14.92{\tiny $\pm$0.08} & &
10.43{\tiny $\pm$0.22} & &
\cellhi -157.40{\tiny $\pm$0.19} \tn 
\addlinespace[0.5pt]
\hdashline
\addlinespace[0.5pt]
Base Coupling Flow & &
-0.44{\tiny $\pm$0.01} & &
-11.75{\tiny $\pm$0.02} & &
16.78{\tiny $\pm$0.08} & &
10.87{\tiny $\pm$0.06} & &
-155.14{\tiny $\pm$0.04} \tn
Voronoi Disjoint Mixture & &
\cellhi -0.52{\tiny $\pm$0.01} & &
\cellhi -12.63{\tiny $\pm$0.05} & &
16.16{\tiny $\pm$0.01} & &
\cellhi 10.24{\tiny $\pm$0.14} & &
-156.59{\tiny $\pm$0.14} \tn
\bottomrule
\end{tabular}
\end{table}

\section{Conclusion and Discussion}\label{sec:conclusion}

We combine Voronoi tessellation with normalizing flows to construct a new invertible transformation that has learnable discrete structure. 
This acts as a learnable mapping between discrete and continuous distributions.
We propose two applications of this method: a Voronoi dequantization that maps discrete values into a learnable convex polytope, and a Voronoi mixture modeling approach that has around the same compute cost as a single component. We showcased the relative merit of our approach across a range of data modalities with varying hidden structure. Below, we discuss some limitations and possible future directions. 

\paragraph{Diminishing density on boundaries.} The distributions within each Voronoi cell necessarily go to zero on the boundary between cells due to the use of a homeomorphism from an unbounded domain. This is a different and undesirable property as opposed to the partitioning method used by \citet{dinh2019rad}. However, alternatives could require more compute cost in the form of solving normalization constants. Balancing expressiveness and compute cost is a delicate problem that sits at the core of probabilistic machine learning.

\paragraph{Design of homeomorphisms and tessellations.} Our proposed homeomorphism is simple and computationally scalable, but this comes at the cost of smoothness and expressiveness. 
As depicted in \Cref{fig:fig1}, the transformed density has a non-smooth probability density function. 
This non-smoothness exists when the ray intersects with multiple boundaries. 
This may make optimization more difficult as the gradients of the log-likelihood objective can be discontinuous. 
Additionally, our use of Euclidean distance can become problematic in high dimensions, as this metric can cause almost all points to be nearly equidistant, resulting in a behavior where all points lie seemingly very close to the boundary.
Improvements on the design of homeomorphisms to bounded domains could help alleviate these problems. Additionally, more flexible tessellations---such as the Laguerre tessellation---and additional concepts from semi-discrete optimal transport~\citep{peyre2017computational,levy2015numerical,gruber2004optimum} may be adapted to improve semi-discrete normalizing flows. 



\bibliography{paper}
\bibliographystyle{icml2022}

\appendix

\onecolumn

\section{Proofs} \label{app:proofs}

\homeomorphism*
\begin{proof}
Let $\vx \in \R^D$ and $\vxk$ be a given anchor point corresponding to a Voronoi cell $V_k$. If $\vx \neq \vxk$, then $\vx$ is uniquely represented by the tuple $(\Delta, \delta)$ where $\Delta = \norm{\vx - \vxk}$ and $\delta = \frac{\vx - \vxk}{\norm{\vx - \vxk}}$, since $\vx = \vxk + \Delta \delta$. Since $\delta$ uniques defines the ray $\{\vxk + \lambda\delta; \lambda >0\}$ and $\vx(\lambda^*)$, then because $\alpha_k$ in \Cref{eq:squash} is a bijection in $\Delta$, $f_k$ is a bijection. For $\vx \neq \vxk$, the continuity of $f_k$ follows from \citet[Theorem 4.7]{rudin1964principles} since $\Delta$ and $\vx(\lambda^*)$ are continuous in $\vx$ and $\alpha_k$ is continuous in $\Delta$. Then since $f_k(\vx) \rightarrow \vxk$ as $\vx \rightarrow \vxk$ from all directions, this justifies the choice of setting $f_k(\vx) = \vxk$ when $\vx = \vxk$. Finally, by the invariance of domain theorem, since $V_k$ is an open set in $\R^D$, $f_k$ is an open map and the inverse $f_k^{-1}$ is continuous, and we can conclude $f$ is a homeomorphism between $\R^D$ and $V_k$.
\end{proof}

\continuouspdf*
\begin{proof}
See proof of \Cref{prop:logdetjac} below for the form of the Jacobian. For the case where $\vx \neq \vxk$, all quantities in \Crefrange{eq:logdetvars_start}{eq:logdetvars_end} are continuous with respect to $\vx$. 
Hence the Jacobian $\frac{\partial f_k(\vx)}{\partial \vx}$ is continuous, and since it is always full-rank, then the composition $\left| \det \frac{\partial f_k(\vx)}{\partial \vx} \right|$ is continuous \citep[Theorem 4.7]{rudin1964principles} and so is the product $p_x(x)\left| \det \frac{\partial f_k(\vx)}{\partial \vx} \right|$ \citep[Theorem 4.9]{rudin1964principles}.
\end{proof}

\logdetprop*
\textit{where}
{
\begin{align}
\label{eq:logdetvars_start}
    \Delta &\triangleq \norm{x - x_k} \\
    c &\triangleq \alpha_k(\tilde{\Delta})\lambda^*\Delta^{-1} \\
    u_1 &\triangleq \left[ \alpha_k(\tilde{\Delta})\Delta^{-1} - \frac{\partial \alpha_k(\tilde{\Delta})}{\partial \tilde{\Delta}}\right]\vdeltax \\
    v_1 &\triangleq \frac{\partial \lambda^*}{\partial \vdeltax} \\
    u_2 &\triangleq \Bigg[
    \left(\frac{\partial \alpha_k(\tilde{\Delta})}{\partial \tilde{\Delta}}\right)
    - \alpha_k(\tilde{\Delta}) \lambda^* \Delta^{-1}
    - \alpha_k(\tilde{\Delta}) \Delta^{-1} \left(\left(\frac{\partial \lambda^*}{\partial \vdeltax}\right)\tran \vdeltax\right) \\
    &\quad\quad\quad\quad\quad\quad\quad\quad
    + \left(\frac{\partial \alpha_k(\tilde{\Delta})}{\partial \tilde{\Delta}}\right) \left(\left(\frac{\partial \lambda^*}{\partial \vdeltax}\right)\tran \vdeltax\right)
    \Bigg] \vdeltax \\
    v_2 &\triangleq \vdeltax \\
    w_{ij} &\triangleq c^{-1}v_i\tran u_j, \text{ for } i,j \in \{1,2\}
\label{eq:logdetvars_end}
\end{align}
}
\begin{proof}
We first write the Jacobian of $f_k$ in the form of $cI + u_1v_1\tran + u_2v_2\tran$ where $c$ is a scalar, and $u_1, u_2, v_1, v_2$ are vectors of size $D$. Define the shorthand $\Delta \triangleq \norm{x - x_k}$, $\Delta^* \triangleq \norm{x(\lambda^*) - x_k}$, and $\tilde{\Delta} \triangleq \nicefrac{\Delta}{\Delta^*}$. To simplify notation, we use the short-hands $\alpha_k = \alpha_k(\tilde{\Delta})$, $\delta_k = \delta_k(x)$. Then the Jacobian follows
{\tiny
\begin{alignat}{1}
&\frac{\partial f_k(x)}{\partial x} \\
    =& \frac{\partial}{\partial x}\left( x_k + \alpha_k(x(\lambda^*) - x_k) \right) \\
    =& \left(x(\lambda^*) - x_k \right)
    \left(\frac{\partial \alpha_k}{\partial \tilde{\Delta}}\right)
    \left(\frac{\partial \tilde{\Delta}}{\partial x}\right)\tran
    + \alpha_k \left( \frac{\partial x(\lambda^*)}{\partial x} \right) \\
    =& \left(x(\lambda^*) - x_k \right)
    \left(\frac{\partial \alpha_k}{\partial \tilde{\Delta}}\right)
    \left(
    \frac{1}{\Delta^*} \delta_k\tran - \frac{\Delta}{(\Delta^*)^2} (x(\lambda^*) - x_k)\tran \frac{\partial x(\lambda^*)}{\partial x}
    \right)
    + \alpha_k \left( \frac{\partial x(\lambda^*)}{\partial x} \right) \\
    =& \left(\frac{\partial \alpha_k}{\partial \tilde{\Delta}}\right)
    \delta_k \delta_k\tran
    - \left(\frac{\partial \alpha_k}{\partial \tilde{\Delta}}\right) \Delta \delta_k \delta_k\tran \frac{\partial x(\lambda^*)}{\partial x}
    + \alpha_k \left( \frac{\partial x(\lambda^*)}{\partial x} \right) \\
    =& \left(\frac{\partial \alpha_k}{\partial \tilde{\Delta}}\right)
    \delta_k \delta_k\tran
    + \left( 
    \alpha_k - \left(\frac{\partial \alpha_k}{\partial \tilde{\Delta}}\right) \Delta \delta_k \delta_k\tran 
    \right) 
    \left( \frac{\partial x(\lambda^*)}{\partial x} \right) \\
    =& \left(\frac{\partial \alpha_k}{\partial \tilde{\Delta}}\right)
    \delta_k \delta_k\tran
    + \left( 
    \alpha_k - \left(\frac{\partial \alpha_k}{\partial \tilde{\Delta}}\right) \Delta \delta_k \delta_k\tran 
    \right) 
    \left( 
    \lambda^* \Delta^{-1} I - \lambda^* \Delta^{-1} \delta_k \delta_k\tran
    + \delta_k \left(\frac{\partial \lambda^*}{\partial \delta_k}\right)\tran 
    \left( \frac{\partial \delta_k}{\partial x} \right) 
    \right) \\
    =& \left(\frac{\partial \alpha_k}{\partial \tilde{\Delta}}\right)
    \delta_k \delta_k\tran
    + \left( 
    \alpha_k - \left(\frac{\partial \alpha_k}{\partial \tilde{\Delta}}\right) \Delta \delta_k \delta_k\tran 
    \right) 
    \left( 
    \lambda^* \Delta^{-1} I 
    - \lambda^* \Delta^{-1} \delta_k \delta_k\tran
    + \Delta^{-1} \delta_k \left(\frac{\partial \lambda^*}{\partial \delta_k}\right)\tran 
    \left( I - \delta_k \delta_k\tran \right) 
    \right) \\
    =& \left(\frac{\partial \alpha_k}{\partial \tilde{\Delta}}\right)
    \delta_k \delta_k\tran
    + \left( 
    \alpha_k - \left(\frac{\partial \alpha_k}{\partial \tilde{\Delta}}\right) \Delta \delta_k \delta_k\tran 
    \right) 
    \left( 
    \lambda^* \Delta^{-1} I 
    - \lambda^* \Delta^{-1} \delta_k \delta_k\tran
    + \Delta^{-1} \delta_k \left(\frac{\partial \lambda^*}{\partial \delta_k}\right)\tran 
    - \Delta^{-1} \left(\left(\frac{\partial \lambda^*}{\partial \delta_k}\right)\tran \delta_k\right) \delta_k \delta_k\tran
    \right) \\
    =& \left(\frac{\partial \alpha_k}{\partial \tilde{\Delta}}\right)
    \delta_k \delta_k\tran
    + \alpha_k \lambda^* \Delta^{-1} I 
    - \alpha_k \lambda^* \Delta^{-1} \delta_k \delta_k\tran
    + \alpha_k \Delta^{-1} \delta_k \left(\frac{\partial \lambda^*}{\partial \delta_k}\right)\tran 
    - \alpha_k \Delta^{-1} \left(\left(\frac{\partial \lambda^*}{\partial \delta_k}\right)\tran \delta_k\right) \delta_k \delta_k\tran \\
    & \quad\quad
    - \left(\frac{\partial \alpha_k}{\partial \tilde{\Delta}}\right) \lambda^* \delta_k \delta_k\tran
    + \left(\frac{\partial \alpha_k}{\partial \tilde{\Delta}}\right) \lambda^* \left(\delta_k\tran \delta_k \right) \delta_k \delta_k\tran
    - \left(\frac{\partial \alpha_k}{\partial \tilde{\Delta}}\right) \left( \delta_k\tran \delta_k \right) \delta_k \left(\frac{\partial \lambda^*}{\partial \delta_k}\right)\tran \\
    & \quad\quad
    + \left(\frac{\partial \alpha_k}{\partial \tilde{\Delta}}\right) \left(\delta_k\tran \delta_k\right) \left(\left(\frac{\partial \lambda^*}{\partial \delta_k}\right)\tran \delta_k\right) \delta_k \delta_k\tran \\
    =& \alpha_k \lambda^* \Delta^{-1} I
    + \left[
    \alpha_k \Delta^{-1} 
    - \left(\frac{\partial \alpha_k}{\partial \tilde{\Delta}}\right) \left( \delta_k\tran \delta_k \right) 
    \right] 
    \delta_k
    \left(\frac{\partial \lambda^*}{\partial \delta_k}\right)\tran \\
    &\quad\quad + \left[
    \left(\frac{\partial \alpha_k}{\partial \tilde{\Delta}}\right)
    - \alpha_k \lambda^* \Delta^{-1}
    - \alpha_k \Delta^{-1} \left(\left(\frac{\partial \lambda^*}{\partial \delta_k}\right)\tran \delta_k\right)
    - \lambda^* \left(\frac{\partial \alpha_k}{\partial \tilde{\Delta}}\right) \left(\delta_k\tran \delta_k - 1 \right)
    + \left(\frac{\partial \alpha_k}{\partial \tilde{\Delta}}\right) \left(\delta_k\tran \delta_k\right) \left(\left(\frac{\partial \lambda^*}{\partial \delta_k}\right)\tran \delta_k\right)
    \right]
    \delta_k \delta_k\tran \\
    =& \alpha_k \lambda^* \Delta^{-1} I
    + \left[
    \alpha_k \Delta^{-1} 
    - \left(\frac{\partial \alpha_k}{\partial \tilde{\Delta}}\right)
    \right] 
    \delta_k
    \left(\frac{\partial \lambda^*}{\partial \delta_k}\right)\tran \\
    &\quad\quad + \left[
    \left(\frac{\partial \alpha_k}{\partial \tilde{\Delta}}\right)
    - \alpha_k \lambda^* \Delta^{-1}
    - \alpha_k \Delta^{-1} \left(\left(\frac{\partial \lambda^*}{\partial \delta_k}\right)\tran \delta_k\right)
    + \left(\frac{\partial \alpha_k}{\partial \tilde{\Delta}}\right) \left(\left(\frac{\partial \lambda^*}{\partial \delta_k}\right)\tran \delta_k\right)
    \right]
    \delta_k \delta_k\tran
\end{alignat}
}%
Now that $\frac{\partial f_k(x)}{\partial x}$ is in the form of $cI + u_1v_1\tran + u_2v_2\tran$, where
{\small
\begin{align}
    c &\triangleq \alpha_k(\tilde{\Delta})\lambda^*\Delta^{-1} \\
    u_1 &\triangleq \left[ \alpha_k(\tilde{\Delta})\Delta^{-1} - \frac{\partial \alpha_k(\tilde{\Delta})}{\partial \tilde{\Delta}}\right]\vdeltax \\
    v_1 &\triangleq \frac{\partial \lambda^*}{\partial \vdeltax} \\
    u_2 &\triangleq \Bigg[
    \left(\frac{\partial \alpha_k(\tilde{\Delta})}{\partial \tilde{\Delta}}\right)
    - \alpha_k(\tilde{\Delta}) \lambda^* \Delta^{-1}
    - \alpha_k(\tilde{\Delta}) \Delta^{-1} \left(\left(\frac{\partial \lambda^*}{\partial \vdeltax}\right)\tran \vdeltax\right) + \left(\frac{\partial \alpha_k(\tilde{\Delta})}{\partial \tilde{\Delta}}\right) \left(\left(\frac{\partial \lambda^*}{\partial \vdeltax}\right)\tran \vdeltax\right)
    \Bigg] \vdeltax \\
    v_2 &\triangleq \vdeltax
\end{align}
}
Applying the matrix determinant lemma twice, we can show that
\begin{align}
    \det (cI + u_1v_1\tran + u_2v_2\tran) &= (1 + v_2\tran(cI + u_1v_1\tran)^{-1}u_2) \det(cI + u_1v_1\tran) \\
    &= \left[\left(1 + c^{-1}v_2\tran u_2 - \frac{c^{-2}}{1+ c^{-1}v_1\tran u_1}\right)v_2\tran u_1 v_1\tran u_2\right] (1 + c^{-1}v_1\tran u_1) c^D
\end{align}
We simplify this by defining the scaled dot products,
\begin{align}
    w_{ij} \triangleq c^{-1}v_i\tran u_j, \text{ for } i,j \in \{1,2\}.
\end{align}
Then
\begin{equation}
    \log \left| \det \frac{\partial f_k(x)}{\partial x} \right| 
    = \log \left| 1 + w_{11} \right| + \log \left| 1 + w_{22} - \frac{w_{12} w_{21}}{1 + w_{11}}\right| + D \log c
\end{equation}
\end{proof}

Intermediate steps above used the following gradient identities.
\begin{equation}
\begin{split}
    \frac{\partial \vdeltax}{\partial x} =& \frac{\partial}{\partial x} (x - x_k) \Delta^{-1} \\
    =& \Delta^{-1} I + (x - x_k) \left(\frac{\partial}{\partial x} \left(\Delta^{2}\right)^{-\frac{1}{2}} \right)\tran \\
    =& \Delta^{-1} I + (x - x_k) \left(\frac{-1}{2 \Delta^3}\right)
    \left(\frac{\partial \Delta^2}{\partial x} \right)\tran \\
    =& \Delta^{-1} I + (x - x_k) \left(\frac{-1}{2 \Delta^3}\right)
    2\left(x - x_k\right)\tran \\
    =& \Delta^{-1} \left ( I - \vdeltax\vdeltax\tran \right)
\end{split}
\end{equation}

\begin{equation}
\begin{split}
    \frac{\partial \Delta}{\partial x} = \frac{\partial \Delta}{\partial x} = \frac{\partial}{\partial x} \left( \Delta^2 \right)^{\frac{1}{2}} = \frac{1}{\Delta} (x - x_k) = \vdeltax
\end{split}
\end{equation}


\section{The inverse mapping}\label{sec:inverse_mapping}

We use the inverse mapping $f_k^{-1}:V_k \rightarrow \R^D$ for training disjoint mixture models, so we next describe how to compute this.
Let $\vz = f_k(\vx)$. 

Conveniently, since both $\vx$ and $\vz$ lie on the ray $\{\vx(\lambda) : \lambda > 0\}$, we know $\vdeltax = \vdeltaz$. 
So the first step is the same as the forward procedure: we solve for $\lambda^*$ and $\vx(\lambda^*)$. 
Following this, we then recover $\vx$ by inverting Step 2 of the forward procedure.

This inverse transformation is given by
\begin{align}
\tilde{\alpha} &= \frac{\vz - \vxk}{\vx(\lambda^*) - \vxk} \label{eq:inv_alpha} \\
\tilde{\Delta} &= \alpha_k^{-1}(\tilde{\alpha}_1) \label{eq:inv_Delta} \\
\Delta &= \tilde{\Delta} \norm{\vx(\lambda^*) - \vxk} \\
x &= \Delta \vdeltaz + \vxk \label{eq:inv_x}
\end{align}
\Cref{eq:inv_alpha} is an element-wise division. Since $\tilde{\alpha}$ will the same in all dimensions, we can simply pick a dimension in \Cref{eq:inv_Delta}. In our experiments, the inverse $\alpha_k^{-1}$ can be computed analytically, though since it is just a scalar function, simple methods like bisection can also work when the inverse is not known analytically. Lastly, \Cref{eq:inv_x} follows from the observation that $\vdeltax = \vdeltaz$.

\paragraph{Log determinant of the inverse.} We can also use \Cref{prop:logdetjac} to compute the log determinant of the inverse transform without needing to recompute $f_k(x)$. 
The only difference is a sign: $\log \left| \det \frac{\partial f_k^{-1}(z)}{\partial z} \right| = -\log \left| \det \frac{\partial f_k(x)}{\partial x} \right|$.
The required quantities, $\Delta$, $\vx(\lambda^*)$, and $\vdeltax$, are readily available after computing $x = f_k^{-1}(z)$. The gradients with respect to quantities of $x$ can be expressed using gradients with respect to quantities of $z$,
\begin{equation}
\frac{\partial \alpha_k(\Delta)}{\partial \Delta} = \left(\frac{\partial \alpha_k^{-1}(\tilde{\alpha})}{\partial \tilde{\alpha}}\right)^{-1}
\quad\quad \text{ and } \quad\quad
\frac{\partial \lambda^*}{\partial \vdeltax} = \frac{\partial \lambda^*}{\partial \vdeltaz},
\end{equation}
which are accessible through automatic differentiation.

\section{Data sets} \label{app:datasets}

\subsection{UCI Data sets}

The main preprocessing we did was to (i) remove the ``label'' attribute from each data set, and (ii) remove attributes that only ever take on one value. Apart from this, the \texttt{USCensus90} data set contains a unique identifier for each row, which was removed. Descriptions for all data set are below.

\paragraph{Connect4 \normalfont{[\href{
http://archive.ics.uci.edu/ml/datasets/connect-4
}{webpage}]}} 
This data set contains all legal 8-ply positions in the game of Connect Four in which neither player has won yet, and in which the next move is not forced. The original task was to predict which player would win, which has been removed during preprocessing. There are a total \textbf{42} discrete variables (one for each location on the board), each with \textbf{3} possible discrete values (taken by player 1, taken by player 2, blank). This data set was randomly split into $54045$ training examples, $6755$ validation examples, and $6757$ test examples.

\paragraph{Forests \normalfont{[\href{
https://archive.ics.uci.edu/ml/datasets/covertype
}{webpage}]}} 
This data set contains cartographic variables regarding forests including four wilderness areas located in the Roosevelt National Forest of northern Colorado. These areas represent forests with minimal human-caused disturbances. The original task was to predict the forest cover type, which has been removed during preprocessing. There are a total of \textbf{54} discrete variables, with \textbf{10} being the highest number of discrete values. This data set was randomly split into $464809$ training examples, $58101$ validation examples, and $58102$ test examples.

\paragraph{Mushroom \normalfont{[\href{
https://archive.ics.uci.edu/ml/datasets/mushroom
}{webpage}]}} 
This data set includes descriptions of hypothetical samples corresponding to 23 species of gilled mushrooms in the Agaricus and Lepiota Family. The original task was to predict whether each species is edible, which has been removed during preprocessing. There are a total of \textbf{21} discrete variables, with \textbf{12} being the highest number of discrete values. This data set was randomly split into $6499$ training examples, $812$ validation examples, and $813$ test examples.

\paragraph{Nursery \normalfont{[\href{
https://archive.ics.uci.edu/ml/datasets/nursery
}{webpage}]}} 
This data set contains attributes of applicants to nursery schools, during a period when there was excessive enrollment to these schools in Ljubljana, Slovenia, and the rejected applications frequently needed an objective explanation.
All data have been completely anonymized.
The original task was to predict whether an applicant would be recommended for acceptance by hierarchical decision model, which has been removed during preprocessing.
There are a total of \textbf{8} discrete variables, with \textbf{5} being the highest number of discrete values. This data set was randomly split into $10367$ training examples, $1296$ validation examples, and $1297$ test examples.

\paragraph{PokerHands \normalfont{[\href{
https://archive.ics.uci.edu/ml/datasets/Poker+Hand
}{webpage}]}} 
This data set contains poker hands consisting of five playing cards drawn from a standard deck of 52. Each card is described using two attributes (suit and rank), for a total of 10 predictive attributes. There is one Class attribute that describes the ``poker hand''.
The original task was to predict the poker hand class (pairs, full house, royal flush, etc.), which has been removed during preprocessing.
There are a total of \textbf{10} discrete variables, with \textbf{13} being the highest number of discrete values. This data set was randomly split into $820008$ training examples, $102501$ validation examples, and $102501$ test examples.

\paragraph{USCensus90 \normalfont{[\href{
https://archive.ics.uci.edu/ml/datasets/US+Census+Data+(1990)
}{webpage}]}} This data set contains a portion of the data collected as part of the 1990 census in the United States, with the data completely anonymized. 
There are a total of \textbf{68} discrete variables, with \textbf{18} being the highest number of discrete values. This data set was randomly split into $2212456$ training examples, $122914$ validation examples, and $122915$ test examples.

\subsection{Itemset Data sets}

These data sets were taken from the Frequent Itemset Mining Data set Repository [\href{http://fimi.uantwerpen.be/data/}{webpage}]. Each row is interpreted as a set of items with no emphasis on the ordering of items.

\paragraph{Retail \textnormal{\citep{brijs99using}}} This data set contains anonymized retail market basket data from an anonymous Belgian retail store. We first removed rows with less than 4 items, then randomly sampled a subset of 4 items for every row. Items that appear in less 300 rows were dropped from the data set. The final preprocessed data set contains \textbf{765} distinct items. This data set was randomly split into $24280$ training examples, $3035$ validation examples, and $3036$ test examples.

\paragraph{Accidents \textnormal{\citep{geurts03using}}} This data set contains contains anonymized traffic accident data. Data on traffic accidents are obtained from the National Institute of Statistics (NIS) for the region of Flanders (Belgium) for the period 1991-2000. We first removed rows with less than 4 items, then randomly sampled a subset of 4 items for every row. This subsampling occurred 10 times if a row has 16 or more items, 5 times if the row has 8 to 15 items, and once if the row has 4-7 items. Items that appear in less 300 rows were dropped from the data set. The final preprocessed data set contains \textbf{213} distinct items. This data set was randomly split into $270129$ training examples, $33766$ validation examples, and $33767$ test examples.

\section{Experimental Details}\label{app:experiment_details}

All experiments were run on a single NVIDIA V100 GPU. Detailed hyperparameter sweeps are below. We used the validation set to choose hyperparameters as well as to perform early stopping.

\paragraph{2D synthetic data sets} Continuous data sets were quantized into 91 bins for each coordinate. For Voronoi dequantization, we dequantized each coordinate into an embedding space of 2 dimensions, with 91 Voronoi cells. The dequantization model is parameterized by 4 layers of coupling blocks, each with a 2 hidden layer MLP with 256 hidden units each, where the Swish activation function was used~\citep{ramachandran2017searching}. The flow model is similarly parameterized but with 16 layers of coupling blocks. Each block alternated between 4 different partitioning schemes: maksing out the first half, masking out the second half, masking out the odd indices, and masking out the even indices. We trained with the \texttt{Adam} optimizer~\citep{kingma2014adam} with a learning rate of \texttt{1e-3}.

\paragraph{Discrete-valued UCI data sets} For Voronoi dequantization, we dequantized each coordinate into an embedding space of 4 or 6 dimensions, with the number of Voronoi cells set to the highest number of discrete values over all discrete variables. The dequantization model is parameterized by 4 layers of coupling blocks, each with a 2 hidden layer MLP with 256, 512, or 1024 hidden units each, where the Swish activation function was used~\citep{ramachandran2017searching}. The flow model is similarly parameterized but with 16 or 32 layers of coupling blocks. Each block alternated between 4 different partitioning schemes: maksing out the first half, masking out the second half, masking out the odd indices, and masking out the even indices. We trained with the \texttt{Adam} optimizer~\citep{kingma2014adam} with a learning rate sweep over \{\texttt{1e-3}, \texttt{5e-4}, \texttt{1e-4}\}.

\paragraph{Itemset data sets} For Voronoi dequantization, we dequantized each coordinate into an embedding space of 6 dimensions, with the number of Voronoi cells set to the number of items in the data set. 
We used a continuous normalizing flow (CNF) with the ordinary differential equation (ODE) defined using a Transformer archiecture and a $L_2$-distance based multihead attention layer~\citep{kim2021lipschitz} and the GeLU activation function~\citep{hendrycks2016gaussian}. No positional embeddings were provided to the model to ensure the model is equivariant to permutations. We composed 12 CNF layers, each defined using a Transformer model that has 2 or 3 layers of alternating multihead attention and fully connected residual connections.
To solve the ODE and train our model, we used the \texttt{dopri5} solver from the torchdiffeq library~\citep{torchdiffeq} with \texttt{atol=rtol=1e-5}.
We trained with the \texttt{AdamW} optimizer~\citep{loshchilov2017decoupled,zhang2018three} with a learning rate of \texttt{1e-3} and weight decay of \texttt{1e-6}. For the Voronoi dequantization, we set $D$=6 for both data sets, though it may be possible to improve performance by tuning $D$.

\paragraph{Character-level language modeling} We used the provided hyperparameters from the open source repository [\href{https://github.com/didriknielsen/argmax_flows}{URL}]. Some parts of code had to be adapted for our usage, but model architecture and optimizer remained largely the same.

\paragraph{Continuous-valued UCI data sets} Disjoint mixture modeling is the inverse of a dequantization method. We use a flow model which maps the data nonlinearly, $z = f(x)$. Then a Voronoi mixture model partitions the space and assigns each $z$ an index through the set identification function $k = g(z)$. The probabilities $p(k)$ are parameterized through a softmax and learned. We then apply the inverse transformation from \Cref{sec:vorproj} to map from $V_k$ to $\R^D$. A conditional normalizing flow is then used to model each mixture model $p(z|k)$. We set the base distribution to be a Gaussian with standard deviation $0.2$, as this helps concentrate the density around the anchor points at initialization.
We tuned the number of flow layers before and after the mixture layer, with the total number of layers in $\{16, 32, 48, 64\}$. We also tuned the number of mixtures in $\{8, 16, 32, 64, 128\}$. Each coupling block uses a neural network with 3 hidden layers of dimension $64$ with either the GeLU or Swish~\citep{ramachandran2017searching} activation function. We trained with a batch size of $1048$ and the \texttt{Adam} optimizer~\citep{kingma2014adam} with a learning rate sweep over \{\texttt{1e-3}, \texttt{5e-3}\}.


\end{document}